\titleformat{\section}[hang]
{\large\bfseries}
{\thesection.}{0.5em}{}
\titleformat{\subsection}[hang]
{\bfseries}
{\thesubsection.}{0.5em}{}
\newcommand{\I}{\mathbbm{I}}
\newcommand{\Mhat}{\hat{M}}
\newcommand{\f}[1]{f^{#1}}
\newcommand{\ytilde}{\tilde{y}}
\newcommand{\rhat}{\hat{r}}
\newcommand{\yhat}{\hat{y}}
\newcommand{\ghat}{\hat{g}}
\newcommand{\chat}{\hat{c}}
\newcommand{\norm}[1]{\left\lVert#1\right\rVert}
\newcommand{\Lhat}{\hat{L}}
\newcommand{\Phihat}{\hat{\Phi}}
\newcommand{\Otilde}{\tilde{\mathcal{O}}}
\newcommand{\C}{\mathcal{C}}
\newcommand{\D}{\mathcal{D}}
\newcommand{\Top}{\mathcal{T}}
\newcommand{\X}{\mathcal{X}}
\newcommand{\rtilde}{\tilde{r}}
\newcommand{\R}{\mathbbm{R}}
\newcommand{\E}{\mathbbm{E}}
\newcommand{\Logloss}{L^{\mathrm{log}}}
\newcommand{\hatrankloss}{\hat{L}^{\mathrm{rnk}}}
\newcommand{\lhatlog}{\hat{L}^{\mathrm{log}}}
\newcommand{\Rank}{L^{\mathrm{rnk}}}
\newcommand{\optimalAlgo}{Top-\textit{k}BBM}
\newcommand{\adaptiveAlgo}{Top-\textit{k}Adaptive}
\newcommand{\bigO}{\mathcal{O}}
\newcommand{\hinge}{L^\mathrm{hinge}}
\newcommand{\WRank}{L^\mathrm{wrnk}}
\DeclareMathOperator{\WL}{WL}
\DeclareMathOperator{\WLC}{Top-\textit{k}WLC}
\DeclareMathOperator{\OnlineWLC}{OnlineWLC}
\DeclareMathOperator{\argmin}{argmin}
\newtheorem{theorem}{Theorem}
\newtheorem{definition}{Definition}
\newtheorem{lemma}[theorem]{Lemma}
\newtheorem{assumption}{Assumption}
\theoremstyle{remark}
\newtheorem*{remark}{Remark}
\definecolor{darkgreen}{rgb}{0,0.6,0}
\newcommand{\genComment}[2]{\ifnum\comments=1{\color{#1}{\textsf{\footnotesize #2}}}\fi}
\title{Online Boosting for Multilabel Ranking with Top-$k$ Feedback}
\author{
Vinod Raman\thanks{denotes alphabetic ordering indicating equal contributions} \\
University of Michigan\\
\texttt{vkraman@umich.edu}\\
\and
Daniel T. Zhang\footnotemark[1]\\
Facebook Infrastructure Engineering \\ 
\texttt{dtzhang@fb.com}\\
  \and
  Young Hun Jung\\
  Microsoft\\
  \texttt{youjung@microsoft.com} \\
  \and
  Ambuj Tewari\\
  University of Michigan\\
  \texttt{tewaria@umich.edu} \\
}
\begin{document}
\twocolumn[
\maketitle]

\begin{abstract}
\noindent We present online boosting algorithms for multilabel ranking with top-$k$ feedback, where the learner only receives information about the top
$k$ items from the ranking it provides. We propose a novel surrogate
loss function and unbiased estimator, allowing
weak learners to update themselves with limited information.
Using these techniques we adapt full information multilabel
ranking algorithms \citep{jungMultilabel} to the top-$k$ feedback setting
and provide theoretical performance bounds which closely match the bounds of their full information counterparts, with the cost of increased sample complexity. These theoretical results are further substantiated by our experiments, which show a small gap in performance between the algorithms for the top-$k$ feedback setting and that for the full information setting across various datasets. 
\end{abstract}
\saythanks  

\section{INTRODUCTION}

The classical theory of boosting is an impressive algorithmic and theoretical achievement (see \citet{schapire2012boosting} for an authoritative treatment). However, for the most part it assumes that learning occurs with a batch of data that is already collected and that ground truth labels are fully observed by the learning algorithm. Modern ``big data" applications require us to go beyond these assumptions in a number of ways.

First, large volumes of available data mean that {\em online algorithms} \citep{shalev2012online,hazan2016introduction} are needed to process them effectively. Second, in many applications such as text categorization, multimedia (e.g., images and videos) annotation, bioinformatics, and cheminformatics, the ground truth may not be just a single label but a {\em set of labels} \citep{zhang2013review,gibaja2015tutorial}. Third, in a multilabel setting, a common design decision \citep{DATA} is to have the learner output a {\em ranking} of the labels. Fourth, human annotators may not have the patience to go down the full ranking to give us the ground truth label set. Therefore, the learner may have to deal with {\em partial feedback}. A very natural partial feedback is {\em top-$k$ feedback} \citep{chaudhuri2017online} where the annotator only provides ground truth for the top-$k$ ranked labels. Theory and algorithms for online, multilabel boosting with top-$k$ feedback have thus far been missing. Our goal in this paper is to fill this gap in the literature.

Existing literature has dealt with some of the challenges mentioned above. For example, recent work has developed the theory of {\em online boosting} for single label problems such as binary classification \citep{beygelzimer2015optimal} and multiclass classification \citep{jung17online}. This was followed by an extension to the complex label setting of {\em multilabel ranking} \citep{jungMultilabel}. 
All of these works were in the {\em full information} setting where ground truth labels are fully revealed to the learner. \citet{zhangBandit} recently extended the theory of multiclass boosting to the {\em bandit} setting where the learner only observes whether the (single) label it predicted is correct or not. However, none of the available extensions of classical boosting has {\em all} of the following three desired attributes at once, namely online updates, multilabel rankings, and the ability to learn from only top-$k$ feedback.

Note that top-$k$ feedback is {\em not} bandit feedback. Unlike the bandit multiclass setting, the learner does not even get to compute its own loss! Thus, a key challenge in our setting is to use the structure of the loss to design estimators that can produce unbiased estimates of the loss from only top-$k$ feedback. This intricate interplay between loss functions and partial feedback does not occur in previous work on online boosting.

Specifically, we extend the full information algorithms of \citet{jungMultilabel} for multilabel ranking problems to the top-$k$ feedback setting . 
Our algorithms randomize their predictions and construct novel unbiased loss estimates. 
In this way, we can still let our weak learners update themselves even with partial feedback.
Interestingly, the performance bounds of the proposed algorithms match the bounds of their full information counterparts with increased sample complexities. 
That is, even with the top-$k$ feedback, one can eventually obtain the same level of accuracy provided sufficient amount of data.
We also run our algorithms on the same data sets that are investigated by \citet{jungMultilabel}, and obtain results supporting our theoretical findings. 
Our empirical results also verify that a larger $k$ (i.e., more information available to the learner) does decrease the sample complexity of the algorithms.




\section{PRELIMINARIES}
The space of all possible labels is denoted by
$[m] = \{1, \cdots, m\}$ for some positive integer $m$, which we assume is known to the
learner. 
We denote the indicator function as
$\I(\cdot)$, the $i$th standard basis vector
as $e_i$, and the zero vector as $0$.
Let $\D_m = \{e_i | i \in [m]\}$. 
We denote a ranking as an ordered tuple.
For example, a ranking $r = (2, 1, 3)$ ranks label $2$ the highest and label $3$ the lowest.
Given a ranking $r$, we let $\Top^k(r)$ return an unordered set of the $k$ top ranked elements. 
For example, $\Top^2((2, 1, 3)) = \{1, 2\}$.

We will frequently use a \textit{score vector}
$s \in \R^m$ to denote a ranking.
We convert it to a ranking using the function $\sigma(s)$, which
orders the members of $[m]$ according to the score $s[i]$. 
We break ties by preferring smaller labels, for example $3$ is preferred over $4$ if votes are even.
This makes the mapping $\sigma(s)$ injective. 
For example, 
$\sigma((.3, .7, .5, .5)) = (2, 3, 4, 1)$.
When it is clear from the context, we will use a score vector and the corresponding ranking interchangeably.

\subsection{Problem Setting}
We first describe the \textit{multilabel ranking} (MLR) problem with top-$k$ feedback.
At each timestep $t=1, \cdots, T$, the relevant labels are $R_t \subseteq [m]$,
and the irrelevant labels are $R_t^c$. 
An adversary chooses a labelled example $(x_t, R_t) \in \X \times 2^{[m]}$ 
(where $\X$ is some
domain) and sends $x_t$ to the learner. 
As we are interested in the MLR setting, the learner then produces an $m$-dimensional score vector $y_t$, and sends this result to the adversary.
In the full information setting, the learner then observes $R_t$ and suffers a loss $L(y_t, R_t)$ which will be defined later.
In the top-$k$ feedback setting, however, it only observes whether $l$ is in $R_t$ for each label $l \in \Top^k(\sigma(y_t))$. 
That is to say, if $k = m$, then it becomes the full information problem, and smaller $k$ implies less information. 
This feedback occurs naturally in applications such as ads placement and information retrieval, where the end user of the system has limited feedback capabilities. 
In such scenarios, $R_t$ may be the set of ads or documents which the user finds relevant, and $[m]$ may be the total set of documents. 
The user only gives feedback (e.g., by clicking relevant ads) for a few documents placed on top by the algorithm. 
The learner's end goal is still to minimize the loss $L(y_t, R_t)$. 
It might not be able to compute the exact value of the loss because $R_t$ is unknown. 
We want to emphasize that even the size of relevant labels $|R_t|$ is not revealed.

To tackle this problem we use the online multilabel boosting setup of 
\cite{jungMultilabel}. In this setting, the learner
is constructed from $N$ online weak learners, $\WL^1, \cdots, \WL^N$ plus a booster
which manages the weak learners.
Each weak learner predicts a
probability distribution across all possible labels, which we
write as $h^i_t \in \R^m$. Previous work has shown that this weak learner restriction encompasses a variety of prediction formats including binary predictions, multiclass single-label predictions, and multiclass multilabel predictions \citep{jungMultilabel}.

In the MLR version of boosting, each round starts when the booster receives $x_t$. It shares this with all the weak learners and
then aggregates their predictions into the final score vector $y_t$.
Once the
booster receives its feedback, it constructs a cost vector
$c^i_t \in \R^m$ for $\WL^i$ so that the weak learners incur loss 
$c^i_t \cdot h^i_t$,
where $h^i_t$ is the $i$th weak learner's prediction at time $t$.
Each weak learner's
goal is to adjust itself over time to produce $h^i_t$ that minimizes its loss.
The goal 
of the booster is to 
generate cost vectors which encourage the weak learners to cooperate in creating better $y_t$.
It should be noted that despite the top-$k$ feedback, our weak learners get full-information feedback, 
which means the entire cost vector is revealed to them. 
Constructing a complete cost vector with partial feedback is one of the main challenges in this problem.

\subsection{Estimating a Loss Function}
\label{sec:unb_est}
Because of top-$k$ feedback, we require methods to estimate loss functions dependent on labels outside of the top-$k$ labels from our score vector $y_t$. 
One common way of dealing with partial feedback is to introduce randomized predictions and construct 
an unbiased estimator of the loss using the known distribution of the prediction. 
This way, we can obtain a randomized loss function 
for our learner to use. 
Thus, we propose a novel unbiased estimator to randomize arbitrary $y_t$. This estimator requires some structure within the loss function it is estimating.

We require that the loss be expressible as a sum of functions which only require as input the scores and relevance of two particular labels, each containing one relevant and irrelevant label. In particular, our loss must have the form
\begin{align*}
L(s, R) 
= 
\sum_{a \in R} \sum_{b \notin R}
f(s[a], s[b]) 
=:
\sum_{a, b \in [m]}
\f{a, b}(s),
\\
\text{where }
\f{a,b}(s)= 
\I(a \in R) \I(b \notin R)f(s[a], s[b]).
\end{align*}
Here $s$ is an arbitrary score vector in $\R^m$, and $f$ is a given function.
We call this property \textit{pairwise decomposability}.
This decomposability allows us to individually estimate each $\f{a, b}$ and thus $L$.

In fact, various valid MLR loss functions are pairwise decomposable.
An example is the \textit{unweighted rank loss}
$
\Rank(s, R_t) = \sum_{a \in R_t} \sum_{b \notin R_t} \I(s[a] \leq s[b]),
$
which has various surrogates, including the following \textit{unweighted hinge rank loss}
$
\hinge(s, R_t) = \sum_{a \in R_t} \sum_{b \notin R_t} \max\{0, s[b] - s[a]+1\}.
$
It should be noted that the \textit{weighted rank loss}
$
\WRank(s, R_t) = \frac{1}{|R_t|(m - |R_t|)}\Rank(s, R_t)
$
cannot be computed using this strategy because its normalization weight
is non-linearly dependent on $|R_t|$. 
In such cases, it is possible to upper bound the target loss function with a surrogate loss that is pairwise decomposable. 
For example, the unweighted rank loss is an obvious upper bound of the weighted one.


Returning to our estimator, we first describe a simple method of randomized prediction given $r_t = \sigma(y_t)$ that will allow us to construct our unbiased estimator.
This randomized prediction $\rtilde_t$  is paramaterized by the exploration rate $\rho \in (0, 1)$. 
After computing $r_t$, with probability $1-\rho$ we use $r_t$ as our final ranking. Otherwise, with probability $\rho$, we choose a uniform random permutation of the labels denoted as $\rtilde_t$. In case the loss is a function of score vector instead of ranking, we can get a random score vector $\tilde{s}_t$ out of $s_t$ in a similar manner by shuffling the scores uniformly at random and breaking any ties randomly.

We note that there are no canonical randomization protocols for ranking problems: many different randomization schemes can be applied but the theoretical loss bound will depend on which randomization procedure is selected. That is, one randomization procedure can be deemed \say{better} than another in the sense that it improves the bound on rank loss. However, the pursuit of improving theoretical bounds can lead to rather complex randomization schemes that impose unnatural restrictions on feasible values of $m$ and $k$. Thus, there exists a trade off between the simplicity/flexibility of a randomization scheme and its impact on the loss bounds. To this end, we investigated several randomization candidates both theoretically and empirically, and here we presented the uniform randomization scheme due to its simplicity and comparable empirical performance to more sophisticated randomization procedures. Other schemes can be found in Appendix \ref{sec:alt_rand}.

We now present our unbiased estimator. Let $\rtilde_t$ be the random ranking from the previously described process, and let $s$ be an arbitrary score vector in $\R^m$. We note that given any two distinct labels $a$ and $b$,
$\Pr[a, b \in \Top^k(\rtilde_t)] > 0$. Since being in the top-$k$ provides the learner with full information regarding the relevance and scores of the labels, we have this unbiased estimator using importance sampling
\begin{equation}
\label{unb_est}
\Lhat(s, R_t) = \sum_{a,b \in [m]}
\frac{\I(a, b \in \Top^k(\rtilde_t))}{\Pr[a, b \in \Top^k(\rtilde_t)]}
\f{a, b}(s).
\end{equation}
We prove that this is an unbiased estimator in Lemma \ref{unb_est_proof} in the appendix. 
Our algorithms will use this unbiased estimator to estimate certain surrogate functions which we construct to be pairwise decomposable.

One useful quality of this estimator is what we
call \textit{$b$-boundedness}. We say a random vector $Y$ is
$b$-bounded if almost surely, $\norm{Y - \E Y}_\infty \leq b$.
This definition also applies to scalar random
variables, in which case the infinity norm becomes the absolute value.
In Lemma \ref{unb_est_bound} in the appendix, we prove that if
the pairwise functions are bounded by some  $z$, then
any such unbiased estimator like in Eq. \ref{unb_est} is bounded with a constant that is
$\mathcal{O}(z\frac{2m^2}{\rho})$. We note that this bound suppresses the dependence on $k$, and in fact, increasing $k$ improves the $b$-boundedness of the estimator. This behavior is substantiated by our empirical results. 



Now suppose that the cost vector $c^i_t$ (to be fed to the $i$th weak learner at time $t$) requires full knowledge of $R_t$ to compute.
If each of its entries is a function that is pairwise decomposable, we can use the same unbiased estimation strategy to obtain random cost vectors $\chat^i_t$ that are in expectation equal to $c^i_t$.

\section{ALGORITHMS}
We introduce two different online boosting algorithms along with their performance bounds. 
Our bounds rely on the number and
quality of the weak learners, so we define the edge of a weak learner. 
Our first algorithm assumes every weak learner
has a positive edge $\gamma$, while our second algorithm uses an edge measured adaptively.



\newcounter{init}
\newcounter{index}
\newcounter{cost}
\newcounter{alpha}
\newcounter{booster}

\subsection{Algorithm Template}
Here, we present the template which our two boosting algorithms
share. 
\begin{algorithm}[ht]
	\begin{algorithmic}[1]
	    \STATE \textbf{Input:} Exploration rate $\rho$ and a loss $L(\cdot, \cdot)$ where $L$ is pairwise decomposable
	    \setcounter{init}{\value{ALC@line}}
	    \STATE \textbf{Initialize:} $\WL$ weights $\alpha^{i}_{1}$ for $i \in [N]$
		\FOR {$t = 1, \cdots, T$}
		\STATE Receive example $x_{t}$
		\STATE Obtain distribution $h^{i}_{t}$ from $WL^i$ for $i \in [N]$
		\STATE Compute experts $s^j_t = \sum_{i=1}^j \alpha^i_t h^i_t$ for $j \in [N]$
		\setcounter{index}{\value{ALC@line}}
		\STATE Select an index $i_t \in [N]$ and set $\yhat_t = s^{i_t}_t$
		\STATE Obtain $\rtilde_t$ from $\sigma(\ytilde_t)$ using $\yhat_t$ and the random process defined in Section \ref{sec:unb_est}
		\STATE Booster suffers loss $L(\ytilde_t, R_t)$, but this is not shown to the booster
		\STATE For each $l \in \Top^k(\rtilde_t)$, receive feedback $\I(l \in R_t)$
		\setcounter{cost}{\value{ALC@line}}
		\STATE Compute cost vectors $\chat^{i}_{t}$ for each $i \in [N]$ 
		\STATE Weak learners suffer loss $\chat^{i}_{t} \cdot h^i_t$ and update internal parameters
		\setcounter{alpha}{\value{ALC@line}}
		\STATE Set $\alpha^i_{t+1}$ for all $i \in [N]$
		\setcounter{booster}{\value{ALC@line}}
		\STATE Update booster's parameters, if any
		\ENDFOR
	\end{algorithmic}
	\caption{Online Rank-Boosting Template}
	\label{alg:template}
\end{algorithm}
It does not specify certain steps, which will
be filled in by the two boosting algorithms. Also,
in our template we do not restrict weak learners in any way except
that each $\WL^i$ predicts a distribution $h^i_t$ over $[m]$,
receives a full cost vector $\chat^i_t \in \R^m$, 
and suffers the loss $\chat^i_t \cdot h^i_t$ according
to its prediction. This allows our boosting framework to be general enough to incorporate various types of weak learning predictions such as single-label and multilabel learners. 


The booster keeps updating the learner weights
$\alpha^i_t$ and constructs $N$ experts, where the
$j$th expert is the weighted cumulative votes from the
first $j$ weak learners
$s^j_t \coloneqq \sum_{i=1}^j \alpha^i_t h^i_t \in \R^m$.
The booster chooses an expert index
$i_t \in [N]$ at each round $t$ to use.
The first algorithm fixes $i_t$ to be $N$, while the second one draws it randomly using an adaptive distribution.
The booster then uses $s^{i_t}_t$ to compute its final random prediction
$\ytilde_t$. After obtaining
feedback,
the booster computes random cost vectors $\chat^i_t$ for
each weak learner and lets them update parameters.

\subsection{An Optimal Algorithm}
Our first algorithm, \optimalAlgo\footnote{ Boost-By-Majority for ranking with top-$k$ feedback},
assumes the ranking weak learning condition and
is optimal, meaning it matches the asymptotic loss bounds
of an optimal full information boosting algorithm in the number
of weak learners used, up to a constant factor.

\subsubsection{Ranking Weak Learning Condition}

The ranking weak learning condition
states that within the cost vector framework,
weak learners can minimize their
loss better than a randomly guessing competitor, so long as the
cost vectors satisfy certain conditions, and with the
weak learners only observing versions of the cost vectors tainted by some noise.

We define the randomly guessing competitor at time $t$ as
$u^\gamma_{R_t}$, which
is an almost uniform distribution placing $\gamma$ more weight
on each label in $R_t$. In particular, for a any label $l$ we define it as
\begin{align*}
u^\gamma_{R_t}[l] = 
\begin{cases}
\frac{1-|R_t|\gamma}{m}  + \gamma 
&\text{ if } l \in R_t \\
\frac{1-|R_t|\gamma}{m}
&\text{ if } l \notin R_t
\end{cases}.
\end{align*}
The intuition is that if a weak
learner predicts a label using $u^\gamma_{R_t}$ at each round,
then its accuracy would be better than random guessing by at
least an edge of $\gamma$.

Given $R_t$, we specify the set of possible cost vectors as
\begin{align*}
\C(R_t) = \{ c \in [0, 1]^m &\mid \min_l c[l] = 0, \max_l c[l] = 1, \\
&\max_{i \in R_t}c[i] \le 
\min_{j \notin R_t}c[j] \}.
\end{align*}
We also allow a sample weight $w_t \in [0, 1]$ to be multiplied by this cost vector.
This feasible set of cost vectors is equivalent to those
used in the full information setting studied by \cite{jungMultilabel}.

As in the full information setting, at each round we allow the adversary to choose an arbitrary cost vector from $\mathcal{C}(R_t)$ and its weight for the learner. In our top-$k$ feedback setting, we further permit the adversary to generate random cost vectors and weights, so long as in expectation each random cost vector is in $\mathcal{C}(R_t)$.

We now introduce our top-$k$ feedback weak learning condition, presented beside the full information online weak learning condition from \cite{jungMultilabel}, to show their similarity.

\begin{definition}[OnlineWLC]
For parameters $\gamma, \delta$, and $S$, a pair of a learner and adversary satisfies $\OnlineWLC(\gamma, \delta, S)$ if for any $T$, with probability $1-\delta$, the learner can generate predictions that satisfy
$$
\sum_{t=1}^T w_t c_t \cdot h_t \leq \sum_{t=1}^T w_t c_t \cdot u^\gamma_{R_t} + S.
$$
\end{definition}

\begin{definition}[$\WLC$]
For parameters $\gamma, \delta, b$, and $S$, a pair of a learner and adversary satisfies $\WLC(\gamma, \delta, S, b)$ if for any $T$, with probability $1-\delta$, the learner can generate predictions that satisfy
$$
\sum_{t=1}^T w_t c_t \cdot h_t \leq \sum_{t=1}^T w_t c_t \cdot u^\gamma_{R_t} + S,
$$
while only observing random cost vectors $\chat_t$, where all $\chat_t$ are $b$-bounded  and $\E \chat_t = w_t c_t$.
\end{definition}
In these definitions, $S$ is called the {\em excess loss}. 
The two weak learning conditions differ only by the introduction of random noise. In $\WLC$ if the variance of each $\chat_t$ is $0$, 
then $\chat_t = c_t$, and we recover the full information weak learning condition.
For a positive $b$, the definition of $b$-boundedness implies $\frac{\chat}{b}$ is $1$-bounded. 
From this, we can infer $S = \bigO(b)$ in the top-$k$ setting. 


\subsubsection{\optimalAlgo~Details}
\label{sec:potential}
\begin{algorithm}[ht]
	\begin{algorithmic}[1]
		\setcounter{ALC@line}{\value{init}}
		\STATE \textbf{Initialize:} $\WL$ weights $\alpha^{i}_{1}=1$ for $i \in [N]$
		\setcounter{ALC@line}{\value{index}}
		\STATE Set $i_t = N$
		\setcounter{ALC@line}{\value{cost}}
		\STATE Compute cost vectors $\chat^{i}_{t}$ for each $i$ using Eq. \ref{cost_vect}
		\setcounter{ALC@line}{\value{alpha}}
		\STATE Set $\alpha^i_{t+1} = 1$
		\setcounter{ALC@line}{\value{booster}}
		\STATE No booster parameters to update
	\end{algorithmic}
	\caption{\optimalAlgo}
	\label{alg:optimal}
\end{algorithm}
We require that our loss function $L(s, R_t)$ be pairwise decomposable, and that each of its pairwise function $f$ has three properties, which we now define. 
\begin{assumption}[Properness]\label{as:prop}
  $f(s[a], s[b])$ is non-increasing in $s[a]$ for $a \in R_t$ and non-decreasing in $s[b]$ for $b \notin R_t$. Intuitively, properness requires that putting a higher score on a relevant label should decrease the loss. 
\end{assumption}
\begin{assumption} [Uncrossability]\label{as:uncross}
  For any $a \in R_t$, $b \notin R_t$, and $\beta >0$, we assume $f(s[a]+\beta, s[b]+\beta) \geq f(s[a], s[b])$. Intuitively, this means that if a weak learner is unsure which label to prefer, it cannot place even weight on both labels and cheat its way to a lower cost.
\end{assumption}
\begin{assumption} [Convexity]\label{as:convex}
    For any $a \in R_t$ and $b \notin R_t$, $f(s[a], s[b])$ is convex w.r.t. the score vector $s$.
\end{assumption}

    
    

We prove these assumptions for the loss functions we use in Appendix 
\ref{loss_properties_proof}. We now briefly discuss notation to describe potential functions,
whose relation to boosting has been thoroughly discussed by \cite{mukherjee}.
Let $\D_m = \{e_i | i \in [m]\}$ and let $u$ be a distribution over this set. Given a starting vector $s \in \R^m$,
a function $f: \R^m \mapsto \R$, and a non-negative integer $N$, we define 
$\varphi^{N-i}_u(s, f) = \E f(s + X)$ where $X$ is the summation of $N-i$ random vectors drawn from $u$.

Moving on to potential functions for boosting, in the full information setting where $R_t$ is revealed
we would use the ground truth potential function
$
\Upsilon^{N-i}_{t}(s) \coloneqq \varphi^{N-i}_{u^\gamma_{R_t}}(s, L(\cdot, R_t))
$ 
to create cost vectors.
It takes the current cumulative
votes $s \in \R^m$ as an input and estimates the booster's loss when the relevant labels are $R_t$ and the weak
learners guess from a distribution $u^\gamma_{R_t}$. 
However, because $R_t$ is not known in our setting, we provide a surrogate
potential function, using the assumptions listed previously in this section.

To compute our surrogate potential function, we first rewrite the ground truth potential
by moving the expectation inside
the pairwise summations:
\begin{equation}
\label{gt_pair}
\begin{gathered}
\Upsilon^{N-i}_t(s) = \sum_{a, b \in [m]} 
	\varphi^{N-i}_{u^\gamma_{R_t}}(s, \f{a, b}),
\end{gathered}
\end{equation}
where we slightly abuse the notation by letting $\f{a, b}$ takes $s$ as an input.
Then we propose the following surrogate potential function
\begin{gather*}
\Phi^{N-i}_t(s) = \sum_{a \in R_t} \sum_{b \notin R_t}
    \Lambda^{a, b, N-i}_{t}(s), \text{ where}\\
\Lambda^{a, b, N-i}_{t}(s) =
	\varphi^{N-i}_{u^\gamma_a}(s, \f{a, b}) 
	\text{ with } 
	u^\gamma_a = u^\gamma_{\{a\}}.
\end{gather*}
We record important qualities of this surrogate potential as a proposition, with the proof in Appendix \ref{sec:optimal_proofs}.

\begin{restatable}{propoosition}{primelemma}
\label{mylemma}
Given Assumptions \ref{as:prop}, \ref{as:uncross}, and \ref{as:convex} on $f^{a,b}$,
$\Phi^{N-i}_t(s)$ is proper and convex, and for any $R$, $\gamma$, $N$, and $s$, we have
$
\Upsilon^{N-i}_t(s) \leq \Phi^{N-i}_t(s).
$
\end{restatable}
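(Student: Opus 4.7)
The plan is to handle the three claims in order. Properness and convexity are relatively quick: each $\Lambda^{a,b,N-i}_t(s) = \E[\f{a,b}(s+X)]$ (with $X$ an iid sum of $N-i$ draws from $u^\gamma_a$) depends on $s$ only through coordinates $s[a]$ and $s[b]$, since $\f{a,b}$ does. Monotonicity (non-increasing in $s[a]$, non-decreasing in $s[b]$) therefore transfers from Assumption~\ref{as:prop} through the expectation; and for any other coordinate $s[c]$ with $c \in R_t$, the term $\Lambda^{a,b,N-i}_t$ with $a \neq c$ does not depend on $s[c]$ at all (noting $c \neq b$ since $c \in R_t$, $b \notin R_t$), so summing over $(a,b) \in R_t \times R_t^c$ preserves properness. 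Convexity is even cleaner: Assumption~\ref{as:convex} gives convexity of each $\f{a,b}$ on $\R^m$, which is preserved under the constant shifts $s \mapsto s+X$, under expectation, and under finite sums.

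The substantive claim is the inequality $\Upsilon^{N-i}_t(s) \leq \Phi^{N-i}_t(s)$. Because $\f{a,b} \equiv 0$ unless $a \in R_t$ and $b \notin R_t$, Eq.~\ref{gt_pair} collapses to a sum over the same index set as $\Phi^{N-i}_t$, so it suffices to prove termwise that $\varphi^{N-i}_{u^\gamma_{R_t}}(s, \f{a,b}) \le \varphi^{N-i}_{u^\gamma_a}(s, \f{a,b})$ for each $a \in R_t$, $b \notin R_t$. I would set this up via a hybrid coupling: let $X_j \sim u^\gamma_a$ and $Y_j \sim u^\gamma_{R_t}$ all be independent, and define $Z_k = X_1 + \cdots + X_k + Y_{k+1} + \cdots + Y_{N-i}$. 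Telescoping $\E[\f{a,b}(s+Z_{N-i})] - \E[\f{a,b}(s+Z_0)]$ into $N-i$ consecutive single-step differences, it suffices (by conditioning on the remaining draws) to prove that for every fixed $v \in \R^m$,
\[
\E_{Y \sim u^\gamma_{R_t}}[\f{a,b}(v+Y)] \leq \E_{X \sim u^\gamma_a}[\f{a,b}(v+X)].
\]

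The one-step inequality is where both remaining assumptions bite, and is the main obstacle. Because $\f{a,b}(v+e_l) = \f{a,b}(v)$ for any $l \notin \{a,b\}$, only the outcomes $l=a$ and $l=b$ contribute to the difference. Using $u^\gamma_a[a] - u^\gamma_{R_t}[a] = u^\gamma_a[b] - u^\gamma_{R_t}[b] = \tfrac{(|R_t|-1)\gamma}{m}$, a direct calculation collapses the one-step gap to
\[
\tfrac{(|R_t|-1)\gamma}{m}\bigl[\f{a,b}(v+e_a) + \f{a,b}(v+e_b) - 2\f{a,b}(v)\bigr],
\]
so the task reduces to proving the bracketed second-difference is nonnegative. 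Here convexity (Assumption~\ref{as:convex}) at the midpoint yields $\tfrac{1}{2}[\f{a,b}(v+e_a) + \f{a,b}(v+e_b)] \ge \f{a,b}(v + \tfrac{e_a+e_b}{2})$, and uncrossability (Assumption~\ref{as:uncross}) with $\beta = 1/2$ gives $\f{a,b}(v + \tfrac{e_a+e_b}{2}) \ge \f{a,b}(v)$; chaining the two delivers the required inequality. Pushing through the telescoping sum and the outer sum over pairs then yields $\Upsilon^{N-i}_t(s) \le \Phi^{N-i}_t(s)$, completing the proof.
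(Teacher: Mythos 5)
Your proposal is correct and takes essentially the same route as the paper: the one-step swap inequality you reduce everything to (only the outcomes $l=a$ and $l=b$ contribute, the gap equals a nonnegative multiple of the second difference $\f{a,b}(v+e_a)+\f{a,b}(v+e_b)-2\f{a,b}(v)$, which is nonnegative by convexity at the midpoint followed by uncrossability with $\beta=\tfrac12$) is precisely the paper's Lemma~\ref{recurs_bound}. The only cosmetic difference is that you iterate it via a hybrid coupling conditioned on the remaining draws, whereas the paper observes that each intermediate potential $\Lambda^{a,b,j}_t$ inherits the three assumptions and reapplies the lemma recursively; these are two formalizations of the same telescoping argument.
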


We also stress that $\Phi^{N-i}_t$ is pairwise decomposable into each of its smaller potential functions.

Returning to the algorithm, 
we assume that weak learners satisfy $\WLC(\gamma, \delta, S, b)$.
Our goal is to set 
$c^i_t[l] = \Phi^{N-i}_{t}(s^{i-1}_t + e_l)$.
Because $\Phi$ is pairwise decomposable, we can create an unbiased estimator
of it using the technique in Section \ref{sec:unb_est} as
\begin{align*}
\Phihat^{N-i}_{t}&(s) = \sum_{a \in R_t} \sum_{b \notin R_t} 
	\frac{\I(a, b \in \Top^k(\rtilde_t))}{\Pr[a, b \in \Top^k(\rtilde_t)]}\Lambda^{a, b, N-i}_{t}(s).
\end{align*}
Because $\Lambda^{a, b, N-i}_t$ is simply a potential function using $\f{a,b}$, any upper bound on $\f{a, b}$ also upper bounds $\Lambda^{a, b, N-i}_t$. 
Then we can use Lemma \ref{unb_est_bound} in the appendix to claim $\Phihat^{N-i}_t$ is $b$-bounded, and therefore create unbiased estimates of $c^i_t[l]$ as
\begin{gather}
\label{cost_vect}
\chat^i_t[l] = \Phihat^{N-i}_{t}(s^{i-1}_t + e_l).
\end{gather}

The rest of the algorithm is straightforward. We set $\alpha^i_t = 1$
for all $i \in [N]$, and select the best expert to be $i_t = N$. This
means that we always take an
equal-weighted vote from all the weak learners. Intuitively,
the booster wants to use all weak learners because they are all guaranteed
to do better than random guessing in the long run, and weigh them equally because all
weak learners have the same edge $\gamma$. Lastly, given the last expert $s^N_t$, our algorithm
predicts using the same random process described in Section \ref{sec:unb_est}.

\subsubsection{\optimalAlgo~Loss Bound}

We can theoretically guarantee the performance of \optimalAlgo~on any proper and pairwise decomposable loss function. 
In our theorem, we bound $L(s^N_t, R_t)$ instead of
the true, randomized loss $L(\ytilde_t, R_t)$, 
and then provide corollaries later for specific losses which we will bound $L(\ytilde_t, R_t)$.
The following theorem holds for any pairwise decomposable loss functions whose pairwise losses satisfy the three qualities listed earlier. The proof appears in Appendix \ref{opt_proof}.

\begin{restatable}[\optimalAlgo, General Loss Bound]{theorem}{bbmbound}
\label{bbm_bound}
For any $T, N$ satisfying $\delta \ll \frac{1}{N}$, and any decomposed function $f^{a,b}$ that satisfies Assumptions \ref{as:prop}, \ref{as:uncross}, and \ref{as:convex}, 
the total
loss
incurred by \optimalAlgo~satisfies the following inequality with probability
at least $1-N\delta$
$$
\sum_{t=1}^T L(s^N_t, R_t) \leq \Phi^N_{R_t}(0)T + \Otilde(\frac{2m^2}{\rho}zN),
$$
where $z$ is the maximum possible value that any $\f{a, b}$ can output, and $\Otilde$ suppresses dependence on $\log\frac{1}{\delta}$.
\end{restatable}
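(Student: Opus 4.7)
The plan is to mirror the potential-function telescoping argument used in the classical boost-by-majority analysis of \cite{mukherjee}, adapted to our pairwise-decomposable surrogate $\Phi^{N-i}_t$ and modified so that it tolerates the booster seeing only the unbiased estimates $\hat c^i_t$ instead of the true cost vectors. The target per-round inequality is
$$
\Phi^{N-i}_t(s^i_t) \leq \Phi^{N-i+1}_t(s^{i-1}_t) + \epsilon^i_t,
$$
with $\sum_t \epsilon^i_t \leq S$ coming from the weak-learning condition. Summing over $i = 1,\ldots,N$ with $s^0_t = 0$ telescopes the left side down to $\Phi^0_t(s^N_t) = L(s^N_t, R_t)$, producing $\sum_t L(s^N_t, R_t) \leq \sum_t \Phi^N_t(0) + NS$. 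A union bound over the $N$ learners, plus the fact that $b$-boundedness from Lemma \ref{unb_est_bound} gives $S = \tilde{\mathcal{O}}(zm^2/\rho)$, then delivers the theorem at confidence $1 - N\delta$.

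For the per-round bound, write $c^i_t[l] = \Phi^{N-i}_t(s^{i-1}_t + e_l)$. Convexity of $\Phi^{N-i}_t$ (Proposition \ref{mylemma}) combined with the fact that $h^i_t$ is a distribution on $\D_m$ gives, by Jensen,
$$
\Phi^{N-i}_t(s^i_t) = \Phi^{N-i}_t(s^{i-1}_t + h^i_t) \leq c^i_t \cdot h^i_t.
$$
To invoke $\WLC(\gamma,\delta,S,b)$ I need $c^i_t$ to be a sample-weighted element of $\mathcal{C}(R_t)$: the bound $\f{a,b} \leq z$ caps $\Phi^{N-i}_t$ by a constant, so an affine renormalization with weight $w_t \in [0,1]$ sends it into $[0,1]^m$, and properness (again from Proposition \ref{mylemma}) ensures the required ordering $\max_{l \in R_t} c^i_t[l] \leq \min_{l \notin R_t} c^i_t[l]$. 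The $b$-boundedness of $\hat c^i_t$ is Lemma \ref{unb_est_bound} applied to the pairwise decomposition.

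The main obstacle is the competitor-side inequality $c^i_t \cdot u^\gamma_{R_t} \leq \Phi^{N-i+1}_t(s^{i-1}_t)$, because the pairwise potentials are built from single-label reference distributions $u^\gamma_a$ while the weak learning competitor is the multi-label $u^\gamma_{R_t}$. I would handle this one pair $(a,b)$ at a time. Since $\f{a,b}$, and hence the per-pair potential $\varphi^{N-i}_{u^\gamma_a}(\cdot, \f{a,b})$, depends on $s$ only through $(s[a], s[b])$, the per-pair cost $c^{i,a,b}_t[l]$ takes only three values: call them $c_a$, $c_b$, and a common $\bar c$ for every $l \notin \{a,b\}$. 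A direct calculation using $\sum_l (u^\gamma_{R_t}[l] - u^\gamma_a[l]) = 0$ shows that $(u^\gamma_{R_t} - u^\gamma_a) \cdot c^{i,a,b}_t$ equals a nonnegative multiple of $2\bar c - c_a - c_b$, so the inequality reduces to the single claim $\bar c \leq \frac{1}{2}(c_a + c_b)$. Writing $\psi(x,y)$ for the restriction of the per-pair potential to the $(a,b)$-coordinates, uncrossability (Assumption \ref{as:uncross}) gives $\psi(x,y) \leq \psi(x+\frac{1}{2}, y+\frac{1}{2})$ and convexity (Assumption \ref{as:convex}) gives $\psi(x+\frac{1}{2}, y+\frac{1}{2}) \leq \frac{1}{2}\psi(x+1, y) + \frac{1}{2}\psi(x, y+1)$; chaining these is exactly $\bar c \leq \frac{1}{2}(c_a + c_b)$, so both assumptions are indispensable here. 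Once this swap is in place, the potential recursion $\sum_l u^\gamma_a[l]\,\varphi^{N-i}_{u^\gamma_a}(s+e_l, \f{a,b}) = \varphi^{N-i+1}_{u^\gamma_a}(s, \f{a,b})$ collapses the resulting bound into $\Phi^{N-i+1}_t(s^{i-1}_t)$.

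Combining, $\WLC(\gamma,\delta,S,b)$ yields $\sum_t w_t c^i_t \cdot h^i_t \leq \sum_t w_t c^i_t \cdot u^\gamma_{R_t} + S$ with probability $1-\delta$, which by the two preceding inequalities becomes $\sum_t \Phi^{N-i}_t(s^i_t) \leq \sum_t \Phi^{N-i+1}_t(s^{i-1}_t) + S$. Telescoping over $i$ and union-bounding over the $N$ learners gives the stated bound with failure probability at most $N\delta$; Lemma \ref{unb_est_bound} supplies $b = \mathcal{O}(zm^2/\rho)$, and the remark following the WLC definition propagates this to $S = \tilde{\mathcal{O}}(b)$, with the $\log(1/\delta)$ factor from the concentration step absorbed into the $\tilde{\mathcal{O}}$.
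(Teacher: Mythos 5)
Your proposal is correct and follows essentially the same route as the paper's proof: the Jensen/convexity step giving $\Phi^{N-i}_t(s^i_t) \leq c^i_t \cdot h^i_t$, the competitor-side swap from $u^\gamma_{R_t}$ to $u^\gamma_a$ via the three-value reduction and the uncrossability-plus-convexity chain (which is exactly the paper's Lemma~\ref{recurs_bound}), followed by the weak learning condition, telescoping over $i$, and a union bound with $S = \Otilde(zm^2/\rho)$ from Lemma~\ref{unb_est_bound}. Your explicit check that the cost vectors normalize into $\C(R_t)$ is a detail the paper leaves implicit, but it does not change the argument.
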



Note that there is no single canonical loss in the MLR setting unlike the classification setting where the $0$-$1$ loss is quite standard. Still, the weighted rank loss comes close to being canonical since it is often used in practice and is implemented in standard MLR libraries \citep{DATA}. It has also has been analyzed theoretically in previous work on ranking (e.g., see \cite{rankloss1} and \cite{rankloss2}).

We note that this loss is not convex, and not pairwise decomposable. Thus we use the {\em unweighted} hinge loss as a surrogate. Since the unweighted hinge loss upper bounds the rank loss, Theorem \ref{bbm_bound} can be used to bound it. This allows us to present the following corollary, whose proof can be found in Appendix \ref{opt_corollary_proof}


\begin{restatable}[\optimalAlgo, Rank Loss Bound]{corollary}{optcorollary}
\label{opt_corollary}
For any $T$ and $N \ll \frac{1}{\delta}$, \optimalAlgo's randomized predictions $\ytilde_t$ satisfy the following  bound on the rank loss with probability at least $1-N\delta$


\begin{align*}
\sum_{t=1}^T \Rank_t(\ytilde_t) \leq& \frac{m^2}{4}(N+1)\exp(-\frac{\gamma^2N}{2})T + \rho m^2 T \\ +& \Otilde(\frac{2m^2}{\rho}N^2\sqrt{T}).
\end{align*}
 
\end{restatable}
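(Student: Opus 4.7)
The plan is to pass through the unweighted hinge rank loss as a surrogate, apply Theorem \ref{bbm_bound} to $\hinge$, and then pay a small price for the randomization together with a martingale concentration to obtain a high-probability bound on $\sum_t \Rank(\ytilde_t, R_t)$. First I would verify that the pairwise function $f(s[a], s[b]) = \max\{0, s[b] - s[a] + 1\}$ is proper (monotone in $s[a]$ and $s[b]$), uncrossable (unchanged under a common shift of the two coordinates), and convex (a maximum of affine functions of $s$). Since each $h^i_t$ is a distribution and $\alpha^i_t = 1$ in \optimalAlgo, every coordinate of $s^N_t$ lies in $[0, N]$, so the maximum value $z$ attained by any $\f{a,b}$ is at most $N+1$. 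Plugging this into Theorem \ref{bbm_bound} gives, with probability at least $1 - N\delta$,
$$\sum_{t=1}^T \hinge(s^N_t, R_t) \leq \Phi^N_{R_t}(0)\,T + \Otilde\!\left(\tfrac{2m^2 N^2}{\rho}\right).$$

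Next I would bound $\Phi^N_{R_t}(0)$. By pairwise decomposition, $\Phi^N_{R_t}(0) = \sum_{a \in R_t,\, b \notin R_t} \E[\max\{0, X[b] - X[a] + 1\}]$, where $X$ is the sum of $N$ i.i.d.\ basis vectors drawn from $u^\gamma_a$. Writing each draw's contribution to $X[a] - X[b]$ as an i.i.d.\ variable taking values in $\{-1, 0, 1\}$ with mean at least $\gamma$, Hoeffding's inequality yields $\Pr[X[a] - X[b] < 1] \leq \exp(-\gamma^2 N/2)$, and since $\max\{0, X[b] - X[a] + 1\} \leq N+1$ almost surely, the expectation is at most $(N+1)\exp(-\gamma^2 N/2)$. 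Summing over the at most $|R_t|(m - |R_t|) \leq m^2/4$ relevant–irrelevant pairs produces the first term $\frac{m^2}{4}(N+1)\exp(-\gamma^2 N/2)\,T$ of the corollary.

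I would then connect $s^N_t$ to $\ytilde_t$. The elementary inequality $\max\{0, x+1\} \geq \I(x \geq 0)$ yields $\Rank(s^N_t, R_t) \leq \hinge(s^N_t, R_t)$. The randomized prediction $\ytilde_t$ coincides with $s^N_t$ with probability $1-\rho$ and is a uniformly shuffled copy with probability $\rho$; by symmetry, the expected rank loss under the shuffle is $|R_t|(m-|R_t|)/2 \leq m^2/8$, so $\E[\Rank(\ytilde_t, R_t) \mid s^N_t] \leq \Rank(s^N_t, R_t) + \rho m^2$. To remove the expectation, I would view $\{\Rank(\ytilde_t, R_t) - \E[\Rank(\ytilde_t, R_t) \mid \mathcal{F}_{t-1}]\}_t$ as a martingale-difference sequence with increments bounded by $m^2/4$ and apply Azuma–Hoeffding to obtain an extra $\Otilde(m^2\sqrt{T})$ term. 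A union bound with the event of Theorem \ref{bbm_bound}, followed by absorbing the smaller $\Otilde(m^2 N^2/\rho)$ excess-loss term into the larger $\Otilde(\frac{2m^2 N^2}{\rho}\sqrt{T})$, delivers the stated inequality.

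The main obstacle I expect is the $\Phi^N_{R_t}(0)$ computation: obtaining the exact prefactor $\frac{m^2}{4}(N+1)$ requires a tight interplay between the worst-case hinge value $N+1$ and the Hoeffding tail $\exp(-\gamma^2 N/2)$, and care is needed because $X[a]$ and $X[b]$ are negatively correlated multinomial coordinates rather than independent binomials. I would circumvent this by reducing directly to the single-step difference $X[a] - X[b]$ as a sum of i.i.d.\ $\{-1, 0, 1\}$-valued random variables before invoking Hoeffding, which keeps the constants clean.
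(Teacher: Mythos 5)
Your proposal is correct and follows essentially the same route as the paper's proof: pass to the unweighted hinge surrogate, invoke Theorem \ref{bbm_bound} with $z = \Theta(N)$, bound $\Phi^N_t(0)$ by reducing $X[a]-X[b]$ to a sum of i.i.d.\ $\{-1,0,1\}$-valued steps with mean $\gamma$ and applying Hoeffding to get the $\frac{m^2}{4}(N+1)\exp(-\gamma^2 N/2)$ prefactor (the paper uses the tail-sum identity $\E[\max\{0,Y+1\}]=\sum_{n=0}^{N}\Pr[Y\ge n]$ where you use worst-case-value-times-probability, but both yield the same constant), then charge $\rho m^2 T$ for exploration in expectation and apply Azuma--Hoeffding plus a union bound. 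The only cosmetic differences are your slightly sharper $m^2/8$ exploration constant versus the paper's worst-case $m^2$ count of newly disordered pairs, and $z = N+1$ versus the paper's $N$, neither of which affects the stated bound.
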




 

We can optimize 
$\rho\propto NT^{-\frac{1}{4}}\sqrt{2}$
so that the first term in the bound becomes the asymptotic average loss bound.
We can compare it
to the asymptotic error bounds in \cite{jungMultilabel} by multiplying
the full information algorithm loss bounds by $\frac{m^2}{4}$, which is the maximum value 
of the rank loss normalization constant. Let $s'_t$ be the score vectors produced by the full information algorithm.
Then we have that
$$
\sum_{t=1}^T \Rank_t(s'_t) \leq \frac{m^2}{4}(N+1)\exp(-\frac{\gamma^2N}{2})T
	+ \frac{m^2}{2}NS.
$$
We see that the asymptotic losses, after optimizing $\rho$, are identical, so that the cost of top-$k$ feedback
appears only in the excess loss. 
Furthermore, since the optimal algorithm in
\cite{jungMultilabel} is optimal in the number of weak learners it
requires to achieve some asymptotic loss, \optimalAlgo~is also optimal
in this regard since the problem it faces is only harder because of partial feedback.

\subsection{An Adaptive Algorithm}
While \optimalAlgo~is theoretically sound, it has a number of drawbacks
in real world applications. First, it is difficult to actually
measure $\gamma$ for a particular weak learner, and usually the
weak learners will not all have the same edge. Second, potential
functions often do not have closed form definitions, and thus
require expensive random walks to compute. 
To address these issues,
we propose an adaptive algorithm, \adaptiveAlgo, modifying Ada.OLMR from \cite{jungMultilabel}
so that it can use top-$k$ feedback.

\subsubsection{Logistic Loss and Empirical Edges}
Like other adaptive boosting algorithms, we require a surrogate loss.
We take the logistic loss for multilabel ranking from Ada.OLMR, but
ignore its normalization (as that would require knowledge of $|R_t|)$
$$
\Logloss(s, R_t) \coloneqq \sum_{a \in R_t} \sum_{b \notin R_t}
	\log(1+\exp(s[b] - s[a])).
$$
This loss is proper and convex. 
As in Ada.OLMR, the booster's prediction
is still graded using the (unweighted) rank loss. This surrogate loss only plays a role
in optimizing parameters.

Similarly to $\Phihat$, we create an unbiased estimator, $\lhatlog(s, R_t)$,
of the logistic loss
as
\begin{align*}
\sum_{a \in R_t} \sum_{b \notin R_t}
	\frac{\I(a, b \in \Top^k(\rtilde_t))}{\Pr[a, b \in \Top^k(\rtilde_t)]}
	\log(1+\exp(s[b] - s[a])).
\end{align*}
Our goal is to set $c^i_t = \nabla \Logloss_t(s^{i-1}_t)$.
However, because we cannot always evaluate the logistic loss, we use
$\lhatlog(\cdot)$ instead to make random cost vectors which in
expectation are the desired cost vectors:
\begin{gather}
\label{cost_vect_ada}
\chat^i_t = \nabla\lhatlog_t(s^{i-1}_t).
\end{gather}

Even though the algorithm is adaptive, we still need an
empirical measure of the weak learner's predictive powers
for performance bounds.
As in Ada.OLMR, we use the following \textit{empirical edge}
of $\WL^i$:
\begin{equation}
\label{edge_define}
\begin{gathered}
\gamma_i = -\frac{\sum_{t=1}^T c^i_t h^i_t}
		{\norm{w^i}_1} \\
w^i[t] = \sum_{a \in R_t} \sum_{b \notin R_t}
	\frac{1}{1+\exp(s^{i-1}_t[a] - s^{i-1}_t[b])},
\end{gathered}
\end{equation}
where $w^i_t$ is the definition of the \textit{weight} of a 
cost vector taken from \cite{jungMultilabel}. 


\subsubsection{\adaptiveAlgo~Details}
\begin{algorithm}[ht]
	\begin{algorithmic}[1]
		\setcounter{ALC@line}{\value{init}}
		\STATE \textbf{Initialize:} $\WL$ weights $\alpha^{i}_{1}=0$\\
		~~~~and expert weights $\nu^i_t=1$ for $i \in [N]$
		\setcounter{ALC@line}{\value{index}}
		\STATE Select an index $i_t \in [N]$ with $\Pr[i_i=i] \propto \nu^i_t$
		\setcounter{ALC@line}{\value{cost}}
		\STATE Compute cost vectors $\chat^{i}_{t}$ for each $i$ using Eq. \ref{cost_vect_ada}
		\setcounter{ALC@line}{\value{alpha}}
        \STATE Set $\alpha^i_{t+1} = \Pi(\alpha^i_t - \eta_t \ghat^{i'}_t(\alpha^i_t))$ where $\eta_{t} = \frac{8 \rho \sqrt{2}}{m^2 \sqrt{t}}$
		\setcounter{ALC@line}{\value{booster}}
		\STATE Set $\nu^i_{t+1} = \nu^i_t \cdot \exp(-\Lhat^{rnk}(s^{i}_t))$
	\end{algorithmic}
	\caption{\adaptiveAlgo}
	\label{alg:adaptive}
\end{algorithm}
We now go into the details of \adaptiveAlgo. 
The choice of cost
vectors $\chat^i_t$ is discussed in the previous section.
As this is an adaptive algorithm, we want to choose the weak learner's 
weights $\alpha^i_t$ at each round. 
We would like to choose them to minimize the
cumulative logistic loss
$
\sum_t g^i_t(\alpha^i_t) \text{ where }
g^i_t(\alpha^i_t) = \Logloss_t(s^{i-1}_t + \alpha^i_t h_t^i)
$
with only the unbiased estimate
$\sum_t \ghat^i_t(\alpha^i_t) \text{ where }
\ghat^i_t(\alpha^i_t) = \lhatlog_t(s^{i-1}_t + \alpha^i_t h_t^i)$
at each time step available to the booster.

Since the logistic loss is convex, we can use our partial feedback to run
\textit{stochastic gradient descent} (SGD). 
To apply SGD,
besides convexity we require that the feasible space be compact, so
we let $F = [-2, 2]$. To stay in the feasible space,
we use the projection function $\Pi(\cdot) = \max\{-2, \min\{\cdot, 2\}\}$
in our update rule as
$
\alpha^i_{t+1} = \Pi(\alpha^i_t - \eta_t \ghat^{i'}_t(\alpha^i_t)),
$
where $\eta_t$ is the learning rate. We bound the loss from SGD and show it provides a regret within 
$\Otilde(\frac{2m^2}{\rho}\sqrt{T})$. 
The details are in the proof in Appendix \ref{sec:adaptiveProof}.

We cannot prove that the last expert is the best because our weak learners
do not adhere to any weak learning condition. 
Instead, we
prove that at least one expert is reliable. Our algorithm uses the 
\textit{Hedge algorithm} from \cite{hedge} to select the best expert from the
ones available, taking as input for the $i$th expert the unweighted rank loss of the $i$th expert, which we define as
$$
\hatrankloss(s^i_t, R_t) = \sum_{a \in R_t} \sum_{b \notin R_t} \frac{\I(a, b \in \Top^k(\rtilde_t))}{\Pr[a, b \in \Top^k(\rtilde_t]}\I(s[b] \geq s[a]).
$$
Because
the exploration rate $\rho$ controls the variance of the loss estimate, we
can combine the analysis of the Hedge algorithm with a concentration
inequality to obtain a similar result.

\subsubsection{\adaptiveAlgo~Loss Bound}
We now bound the cumulative rank loss of \adaptiveAlgo~using the weak learner's
empirical edges. 
The proof appears in Appendix \ref{sec:adaptiveProof}.

\begin{restatable}[\adaptiveAlgo, Rank Loss Bound]{theorem}{adabound}
\label{ada_bound}
For any $T, N$ satisfying $\delta \ll \frac{1}{N}$, the cumulative rank loss
of \adaptiveAlgo, $\sum_{t=1}^{T}\Rank(\ytilde_t, R_t)$, satisfies the following bound with probability at least
$1-(N+4)\delta$:
$$
 \frac{2m^2}{\sum_i|\gamma_i|}T +\rho m^2T
	+ \Otilde(\frac{2m^2 N\sqrt{T}}{\rho\sum_i|\gamma_i|}),
$$
where $\Otilde$ suppresses dependence on $\log\frac{1}{\delta}$.
\end{restatable}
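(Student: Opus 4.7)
The plan is to adapt the Ada.OLMR analysis of \cite{jungMultilabel} to handle the unbiased estimators $\lhatlog$ and $\hatrankloss$ that replace their full-information counterparts. First I would split $\sum_t \Rank(\ytilde_t, R_t)$ into a randomization overhead and the rank loss of the chosen expert. With probability $\rho$ the algorithm plays a uniform random permutation, whose rank loss is at most $m^2$, contributing the $\rho m^2 T$ term. With probability $1-\rho$ the prediction is $s^{i_t}_t$, so it remains to bound $\sum_t \Rank(s^{i_t}_t, R_t)$.

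Next I would analyze the Hedge-style expert selection that maintains $\nu^i_t$. Since the algorithm feeds the unbiased estimates $\hatrankloss(s^i_t, R_t)$ into Hedge, its standard regret analysis compares the estimated cumulative loss of $i_t$ to that of the best expert $i^*$ up to an $\bigO(\sqrt{T \log N})$ term. To convert estimated losses back to true ones I would apply Azuma-Hoeffding to the martingale differences $\hatrankloss(s^i_t, R_t) - \Rank(s^i_t, R_t)$: because the estimator is $b$-bounded with $b = \bigO(m^2/\rho)$ by Lemma \ref{unb_est_bound}, each expert contributes a concentration event of failure probability $\delta$, and a union bound over the $N$ experts accounts for most of the final $(N+4)\delta$ failure budget.

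The third step is to bound the rank loss of the best expert $i^*$. The pointwise inequality $\I(x \leq 0) \leq \log(1+e^{-x})/\log 2$ inside the pairwise sum gives $\Rank(s^i_t, R_t) \leq \Logloss(s^i_t, R_t)/\log 2$, so it suffices to bound the minimum cumulative logistic loss across experts. Telescoping through experts produces
\[
\sum_{i=1}^N \sum_{t=1}^T \bigl[g^i_t(\alpha^i_t) - g^i_t(0)\bigr] = \sum_{t=1}^T \bigl[\Logloss(s^N_t, R_t) - \Logloss(s^0_t, R_t)\bigr],
\]
and I would analyze each inner sum by running SGD regret on the update $\alpha^i_{t+1} = \Pi(\alpha^i_t - \eta_t \ghat^{i'}_t(\alpha^i_t))$ against a fixed comparator $\alpha^*_i = 2\,\mathrm{sgn}(\gamma_i)$. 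Because $\ghat^i_t$ is an unbiased estimator of $g^i_t$ with gradients bounded by $\bigO(m^2/\rho)$, stochastic online convex optimization yields a per-expert regret of $\Otilde(m^2 \sqrt{T}/\rho)$. Convexity of the logistic loss together with the definition of $\gamma_i$ in Eq.~\ref{edge_define} lower-bounds $g^i_t(0) - g^i_t(\alpha^*_i)$ in terms of $|\gamma_i|\norm{w^i}_1$, and the inequality $\frac{1}{1+e^x} \geq \frac{1}{2}\I(x\leq 0)$ relates $\norm{w^i}_1$ to the cumulative rank loss of $s^{i-1}_t$. Algebraic rearrangement then yields the $\frac{2m^2 T}{\sum_i |\gamma_i|}$ leading term and the $\Otilde(\frac{m^2 N\sqrt{T}}{\rho \sum_i |\gamma_i|})$ lower-order term.

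The main obstacle is chaining together the many high-probability statements while controlling the inflated variance of the importance-weighted estimators. Each of the $N$ per-expert SGD regret bounds, the $N$ Azuma arguments converting estimated to true rank losses for Hedge, and a constant number of additional concentration events (for the randomization step and the SGD martingale terms) must all hold simultaneously, each using an Azuma bound scaled by the $\bigO(m^2/\rho)$ range of the estimators. The most delicate step will be ensuring that the comparator $\alpha^*_i$ chosen in the SGD analysis does not depend on the $\rtilde_t$'s in a way that breaks the martingale structure, and that the subsequent algebra produces the clean $\sum_i |\gamma_i|$ factor in the denominator rather than a weighted combination involving $\norm{w^i}_1$.
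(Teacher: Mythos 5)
Your proposal follows essentially the same route as the paper's proof: split off the $\rho m^2 T$ exploration overhead, run Hedge on the importance-weighted rank-loss estimates with an Azuma/$b$-boundedness concentration step, run per-expert SGD on the estimated logistic loss, telescope $\Logloss(s^N_t)-\Logloss(0,R_t)$ across experts, use $\tfrac{1}{1+e^{x}}\ge\tfrac12\I(x\le 0)$ to get $\norm{w^i}_1\ge M_{i-1}/2$, and solve for $\min_i M_i$. One sentence of your third step is a detour, though: the reduction ``$\Rank\le\Logloss/\log 2$, so it suffices to bound the minimum cumulative \emph{logistic} loss'' is not what the rest of your own argument (or the paper) does, and if pursued literally it fails --- the telescope only yields the $\tfrac{1}{\sum_i|\gamma_i|}$ factor because the decrement $|\gamma_i|\norm{w^i}_1$ is lower-bounded by the \emph{rank} loss $M_{i-1}$ and the non-negative $\Logloss(s^N_t)$ is then dropped; bounding the minimum logistic loss itself would only give the vacuous $\bigO(m^2 T)$. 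Since the machinery you actually describe afterwards bounds $\min_i M_i$ directly through the weights, this is a harmless misstatement rather than a gap; the only other difference is bookkeeping (you union-bound concentration over all $N$ experts for Hedge where the paper applies it only to $i^*$, costing $\approx 2N\delta$ rather than $(N+4)\delta$, which is immaterial under $\delta\ll 1/N$).
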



By optimizing 
$\rho\propto\sqrt{\frac{2N}{\sum_i |\gamma_i|}}T^{-\frac{1}{4}}$, 
we get the first term of the bound as the asymptotic average loss bound.
To compare it with the adaptive algorithm in \cite{jungMultilabel}, we again multiply the bound by $\frac{m^2}{4}$ to account for the normalization constant.
Let $s'_t$ be the scores of the full information adaptive algorithm at time $t$. Then we have
$$
\sum_t \Rank_t(s'_t) \leq \frac{2m^2}{\sum_i |\gamma_i|}T + \Otilde(\frac{N^2m^2}{\sum_i |\gamma_i|}).
$$
This matches the asymptotic loss of Ada.OLMR in \cite{jungMultilabel}, after optimizing for $\rho$. Thus, the cost of top-$k$ feedback is again only present in the excess loss.

\section{EXPERIMENTS}
We compare various boosting algorithms on benchmark data sets using publicly available codes\footnote{We will provide a link to a github in the camera ready version. We are hiding it for review to preserve anonymity.}. The models we use are our own, \optimalAlgo~(TopOpt) and \adaptiveAlgo~(TopAda), along with OnlineBMR (FullOpt) and Ada.OLMR (FullAda) by \cite{jungMultilabel}, the  full information algorithms we compared our theoretical results to. All of these boosters use the same multilabel weak learners as in \cite{jungMultilabel}.



We examine several data sets from the UCI data
repository from \cite{DATA} that have been used to evaluate the full information algorithms.
We follow the preprocessing steps from \cite{jungMultilabel} to ensure consistent comparisons, and the data set details and statistics appear in Table \ref{tab:dataset_summary} in the appendix.
However, because the top-$k$ feedback algorithms require more data to converge, we loop over the training set a number of times before evaluating on the testing data set. 
We consider the number of loops a hyper-parameter. However, it never exceeds $20$. The other hyper-parameters we optimize are the number of weak learners $N$, and the edge $\gamma$ for TopOpt.
Experimental details can be found in Appendix \ref{sec:experiment_details}.

\subsection{Asymptotic Performance}

Since for the rank loss, the theoretical asymptotic error bounds of the proposed algorithms match their full information counterparts, we first compare the models’ empirical asymptotic performance. For the full information algorithms, we looped the training set once and then ran the test set, while for the top-$k$ algorithms, we looped them as described in the previous subsection.
In these tests, we set $k=3$.
The selected hyper-parameters, including number of loops, appear in Table \ref{tab:hyperparameters} in the appendix. 
Each table entry is the result of $10$ runs averaged together. Note, that the average loss of the optimal algorithm on the Mediamill dataset could not be computed in reasonable time due to the large number of labels. The main bottleneck is the computation of potentials as they do not have closed form. To combat this issue, we generated and ran our algorithms on a reduced version titled M-reduced. 

\begin{table}[h]
\caption{Average Weighted Rank Loss on Test Set using Uniform Randomization}
\begin{center}
    \setlength\tabcolsep{2pt}
    \begin{tabular}{cccccccc}
    Data &$m$  &TopOpt &TopAda &FullOpt &FullAda \\
    \hline \\
    Emotions            &6       &0.20  &0.22   &0.17   &0.16\\
    Scene               &6      &0.11   &0.13   &0.07   &0.09\\
    Yeast               &14      &0.23   &0.23   &0.18   &0.19\\
    Mediamill           &101     &--   &0.09   &--   &0.05\\
    M-reduced   &101       &0.11   &0.11   &0.06   &0.06\\
    \end{tabular}
    \label{asymptotic_table}
\end{center}
\end{table}
In Table~\ref{asymptotic_table}, we see that in each data set, the full information algorithms outperform their top-$k$ feedback counterparts, but that the gap is quite small. The largest gap is between TopAda and FullAda on Emotions at $0.06$. 
The smallness of this gap shows our algorithms are learning as effectively as their full information counterparts. Note that the exploration rate $\rho$ for each experiment was tuned and is included in Appendix \ref{sec:experiment_details}.


Another factor is the number of loops run. For the data sets with smaller $m$, the number of loops multiplied by $k$ implies that our algorithms could have observed each label multiple times in its training. For example, with $m=6$ and $k=3$, theoretically in two loops of the training set, our algorithms could have observed them in their entirety.
However, in the Mediamill and M-reduced data sets, our algorithms manage comparable asymptotic performance, while at best, given $m$ and $k$ they could only have possibly observed $60$ labels, or about 60\% training labels. 
This shows they are capable of making inferences even with partial information.

\subsection{Effects of Varying Observability}
\begin{figure}[h]
\vspace{0in}
\begin{center}
\includegraphics[width=0.99\columnwidth]{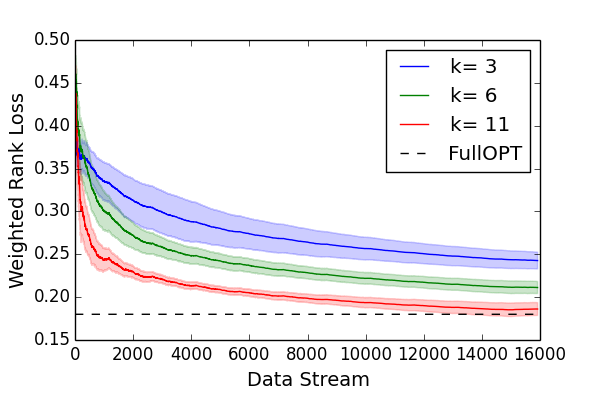}
\end{center}
\vspace{0in}
\caption{Learning Curves with 1-Standard Deviation Error Bands of \optimalAlgo~Varying $k$ on the Yeast Dataset
}
\label{fig:k}
\end{figure}


To show the empirical effects of top-$k$ feedback on model convergence and asymptotic loss, we repeat our experiments with the Yeast data set, keeping the same hyper-parameters in Table \ref{tab:hyperparameters}, but increasing $k$. Figure \ref{fig:k} plots the weighted rank loss averaged over the data stream for \optimalAlgo~models with various $k$. Each plot in the figure is the average of $10$ experiments along with a 1-standard deviation error band around the curve. We observe that as $k$ increases, the number of rounds \optimalAlgo~ requires to converge decreases. Furthermore, we observe tighter bands around the curves as $k$ increases, corroborating our theoretical results which imply smaller variance for larger $k$. 




\section{CONCLUSION}
In this paper we presented two online boosting algorithm that make multilabel ranking (MLR) predictions under top-$k$ feedback. Remarkably, we show that even under top-$k$ feedback, both algorithms enjoy the same asymptotic loss bounds as their full-information counterpart and enjoy comparable empirical performance across various datasets. There are several future directions to consider in online MLR boosting. For one, our algorithms rely heavily on pairwise decomposable loss functions, and thus, it is open to whether other types of loss functions, such as singly-decomposable loss functions, can be used for online MLR boosting under top-$k$ feedback. Second, we provide a fast, yet suboptimal, adaptive algorithm, and it would be interesting to see if an optimal adaptive algorithm is possible. 


\subsubsection*{Acknowledgements}
Part of this work occurred while DZ was an undergraduate student at the University of Michigan where he was partially supported by the NSF RTG grant DMS-1646108. This work was mostly done when YJ was at the University of Michigan. YJ and AT acknowledge the support of NSF CAREER grant IIS-1452099.

\bibliography{citations}

\clearpage
\newpage
\onecolumn
\begin{appendices}
\section{DETAILED PROOFS}
\label{sec:detailed_proofs}
We provide proofs that are skipped in the main body. 
\subsection{Proofs for the Unbiased Estimator}

\begin{lemma}
\label{unb_est_proof}
Suppose that we have $k \geq 2$, and a random ranking $\rtilde$ such that for any $a, b \in [m]$, $\Pr[a, b \in \Top^k(\rtilde)] > 0$. 
Let $g(s) \coloneqq \sum_{a\in R_t} \sum_{b \notin R_t} \f{a,b}(s)$.
Then the expectation of Eq. \ref{unb_est} becomes $g(s)$.

\begin{proof}
We first write out the unbiased estimator that Eq. \ref{unb_est} provides of $g$
$$
\hat{g}(s) =  \sum_{a\in R_t} \sum_{b \notin R_t}
    \frac{\I(a, b \in \Top^k(\rtilde))}{\Pr[a, b \in \Top^k(\rtilde)]}
    \f{a,b}(s).
$$

Then we rewrite the expectation of $\hat{g}$ by moving it inside the sum.
\begin{align*}
\E_{\rhat}[\hat{g}(s, R_t)] &=
    \sum_{a\in R_t} \sum_{b \notin R_t} \E_{\rtilde}\big[ \frac{\I(a, b \in \Top^k(\rtilde))}{\Pr[a, b \in \Top^k(\rtilde)]} \f{a,b}(s) \big] \\
    &= \sum_{a\in R_t} \sum_{b \notin R_t} \Pr[a, b \in \Top^k(\rtilde)] \frac{\f{a,b}(s)}{\Pr[a, b \in \Top^k(\rtilde)]} \\
    &= g(s)
\end{align*}
where the middle equality is due to the expectation inside the summation being zero unless $a, b \in \Top^k(\rtilde)$. 
\end{proof}
\end{lemma}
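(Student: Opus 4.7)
The plan is to prove this by a direct application of linearity of expectation and importance sampling, which is the standard technique for showing unbiasedness of such estimators. The statement of the lemma defines $\hat{g}$ (equivalent to $\hat{L}$ from Eq.~\ref{unb_est} restricted to pairs with $a \in R_t, b \notin R_t$) and the pairwise decomposition $f^{a,b}(s) = \I(a \in R_t)\I(b \notin R_t) f(s[a], s[b])$ already ensures that terms with $a \notin R_t$ or $b \in R_t$ vanish. So the only thing left is to verify that the inverse-probability weighting cancels exactly with the probability of the indicator event being one.

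First I would push the expectation through the double sum over $(a,b) \in [m]\times[m]$ using linearity. Each summand factors as $\frac{f^{a,b}(s)}{\Pr[a,b \in \Top^k(\tilde{r})]} \cdot \I(a, b \in \Top^k(\tilde{r}))$, and since $s$ and $R_t$ are fixed (the randomness is only in $\tilde{r}$), the only random quantity inside the expectation is the indicator. Taking the expectation of the indicator yields $\Pr[a,b \in \Top^k(\tilde{r})]$, which cancels the denominator. Crucially, the hypothesis that $\Pr[a,b \in \Top^k(\tilde{r})] > 0$ for every pair guarantees this ratio is well-defined; in the uniform-exploration scheme of Section~\ref{sec:unb_est}, the probability $\rho$ of a uniform random permutation ensures $\Pr[a,b \in \Top^k(\tilde{r})] \geq \rho \cdot \frac{k(k-1)}{m(m-1)} > 0$ whenever $k \geq 2$, matching the hypothesis.

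After cancellation the expectation collapses to $\sum_{a \in R_t}\sum_{b \notin R_t} f^{a,b}(s)$, which is exactly $g(s)$. There is no real obstacle here; the lemma is essentially a one-line importance-sampling argument, and the only thing worth being careful about is the condition $k \geq 2$ (so that two distinct labels can simultaneously sit in $\Top^k$) and the fact that the expectation is taken only over $\tilde{r}$ with $s$ treated as a fixed input. I would conclude by noting that the same argument applies verbatim to the cost-vector estimates $\hat{c}^i_t$ and to $\hat{\Phi}^{N-i}_t$ built from the pairwise potentials $\Lambda^{a,b,N-i}_t$, which is why the paper can reuse this single lemma throughout.
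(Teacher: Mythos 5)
Your proposal is correct and follows exactly the paper's own argument: linearity of expectation over the pairwise sum, the expectation of the indicator producing $\Pr[a,b \in \Top^k(\rtilde)]$, and cancellation with the importance-sampling denominator. The additional remarks about why the probability is strictly positive under uniform exploration and why $k \geq 2$ is needed are consistent with (and slightly more explicit than) what the paper states.
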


\begin{lemma}
\label{unb_est_bound}
Suppose our loss function $L$ is pairwise decomposable and thus has an unbiased estimator $\hat{L}$ If there exists $z$ such that $f^{a, b}(s) \leq z$ for all feasible $s$ available to the booster,  then under uniform randomization we have 
$$
|\hat{L}(s, R_t) - L(s, R_t)| = \mathcal{O}(z\frac{ 2m^2}{\rho}) \text{ almost surely}.
$$
\end{lemma}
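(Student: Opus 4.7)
}

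The plan is to bound $|\hat L(s,R_t)|$ and $|L(s,R_t)|$ separately and then apply the triangle inequality, the key quantitative input being a lower bound on $\Pr[a,b \in \Top^k(\tilde r_t)]$ under uniform randomization.

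First I would lower bound the importance weights. Under the randomization scheme of Section \ref{sec:unb_est}, with probability $\rho$ the ranking $\tilde r_t$ is a uniform random permutation of $[m]$; for any two distinct labels $a,b$, the chance that both land in the top $k$ slots of a uniform permutation equals $\binom{m-2}{k-2}/\binom{m}{k} = k(k-1)/(m(m-1))$. Hence
\[
\Pr[a,b \in \Top^k(\tilde r_t)] \;\ge\; \rho\,\frac{k(k-1)}{m(m-1)},
\]
so each inverse-probability weight appearing in $\hat L$ is at most $m(m-1)/(\rho\,k(k-1))$.

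Next I would count the non-zero terms of $\hat L$. The summand $\I(a,b \in \Top^k(\tilde r_t))\,f^{a,b}(s)/\Pr[\cdot]$ vanishes unless $a\in R_t$, $b\notin R_t$, and both $a$ and $b$ lie in $\Top^k(\tilde r_t)$. Writing $j = |R_t \cap \Top^k(\tilde r_t)|$, the number of such ordered pairs is $j(k-j)$, which is at most $k^2/4$ by AM-GM. Combining this with the bound from the previous paragraph and the hypothesis $f^{a,b}(s)\le z$,
\[
|\hat L(s,R_t)| \;\le\; \frac{k^2}{4}\cdot\frac{m(m-1)}{\rho\,k(k-1)}\cdot z
\;=\; \frac{z\,m(m-1)\,k}{4\rho(k-1)} \;\le\; \frac{z\,m^2}{2\rho},
\]
where the last inequality uses $k/(k-1)\le 2$ for $k\ge 2$.

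For the true loss, at most $|R_t|\cdot|R_t^c| \le m^2/4$ of the terms $f^{a,b}(s)$ are non-zero, each bounded by $z$, so $|L(s,R_t)| \le z m^2/4$. The triangle inequality now gives
\[
|\hat L(s,R_t) - L(s,R_t)| \;\le\; |\hat L(s,R_t)| + |L(s,R_t)| \;\le\; \frac{z m^2}{2\rho} + \frac{z m^2}{4} \;=\; \mathcal{O}\!\left(\frac{2 z m^2}{\rho}\right),
\]
almost surely, as claimed.

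The proof is essentially a bookkeeping exercise; the only mildly delicate step is the lower bound on $\Pr[a,b\in \Top^k(\tilde r_t)]$, since one must argue that even though the deterministic component of the randomization may avoid placing a particular pair in the top $k$, the exploration term alone suffices to guarantee a $\rho\,k(k-1)/(m(m-1))$ floor. Ties are not an issue because uniform shuffling breaks them uniformly at random, so the stated probability holds for all distinct pairs $(a,b)$.
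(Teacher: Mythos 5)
Your proof is correct and rests on exactly the same ingredients as the paper's: the lower bound $\Pr[a,b\in\Top^k(\tilde r_t)]\ge \rho\,k(k-1)/(m(m-1))$ coming from the uniform exploration component, the observation that at most $\mathcal{O}(k^2)$ pairwise terms are activated in $\hat L$, and the $k\ge 2$ assumption to absorb $k/(k-1)$. The only difference is bookkeeping --- you bound $|\hat L|+|L|$ via the triangle inequality where the paper splits into over- and under-estimation cases and bounds $1/\Pr[\cdot]-1$ directly --- which yields the same $\mathcal{O}(zm^2/\rho)$ conclusion.
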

\begin{proof}
We first bound the case where our estimator underestimates the true loss. In this case, the worst scenario would be all the pairwise loss functions which we activate evaluate to 0, while all other functions evaluate to $z$. In these cases, our bound is $O(z(m^2 - k^2))$ as our absolute deviation is proportional to $z$ times the number of element pairs that are not in the top-$k$. 

We now consider the case where our estimator overestimates the true loss. Let $\tilde{V} = \mathcal{T}^k(\tilde{r}_t)$, where $\tilde{r}_t$ is the randomized ranking as described above. The worst-case scenario here is when all the pairwise functions we activate evaluate to $z$, and all other pairwise functions evaluate to $0$. Note that since pairwise functions activate only if \textbf{both} $a$ and $b$ are in the top-$k$ we only really need to sum over pairs of elements in the top-$k$, as only these pairs contribute a value proportional to $z$ to the absolute deviation, while those pairs not in the top-$k$ contribute $0$. Thus, at most $k^2$ pairs need to be considered. 

Now it remains to bound $\frac{1}{Pr[a,b \in \tilde{V})]} - 1$. Note that trivially, $\frac{1}{Pr[a,b \in \tilde{V}]} - 1 \leq \frac{1}{Pr[a,b \in \tilde{V}]}$, thus it suffices to only bound the latter. For any pair of labels $a, b \in [m]$, the probability that $a,b \in \tilde{V}$ is at least the probability that $a,b$ are in the top-$k$ after possible exploration. Since we select a permutation uniformly at random when we explore, the probability that $a$ and $b$ are in the top-$k$ after exploration is simply the number of these permutations which have $a$ and $b$ in the top-$k$ over the total number of permutations $m!$ Put together,  

\[Pr[a,b \in \tilde{V}] \geq \rho \cdot \frac{2 \cdot \binom{k}{2} \cdot (m-2)!}{m!} = \rho \cdot \frac{k(k-1)}{m(m-1)}\]

The intuition behind the numerator of the fraction is that for each of the $2 \cdot \binom{k}{2}$ possible placements of $a$ and $b$ in the top-$k$, there are $(m-2)!$ ways to order the remaining labels. This leads to $\frac{1}{Pr[a,b \in \tilde{V}]} - 1 \leq \frac{m^2}{\rho k(k-1)}$. Finally, putting together the number of iterations and the above probability bound, we find that when our estimator overestimates the true loss in the worst case by:
     \begin{align*}
        |\hat{L}(s, R_t) - L(s, R_t)| &\leq z \frac{k^2m^2}{\rho k(k-1)} \\
        &= \frac{k}{k-1} \cdot z\frac{m^2}{\rho} \\
        &\leq z \frac{2m^2}{\rho} 
    \end{align*}
where the second inequality follows from the assumption that $k \geq 2$. Thus, overall $ |\hat{L}(s, R_t) - L(s, R_t)| = O(z\frac{2m^2}{\rho})$.

\end{proof}

\subsection{Proofs for the Optimal Algorithm}
\label{sec:optimal_proofs}
For our optimal algorithm proofs, we require an important lemma comparing biased uniform distributions and our surrogate potential function distributions.

\begin{lemma}
\label{recurs_bound}
Let $a, b \in [m]$ and $\f{a, b}(\cdot)$ be a pairwise function which satisfies the three assumptions stated in Section \ref{sec:potential}. 
Then for any set of relevant labels $R \subset [m]$, we have
$$
\E_{e_l \sim u^\gamma_R}[\f{a,b}(s+e_l)] \leq 
    \E_{e_{l^\prime} \sim u^\gamma_a}[\f{a,b}(s+e_{l^\prime})].
$$
where $u^\gamma_R$ and $u^\gamma_a$ are the biased uniform distributions, placing $\gamma$ more weight on members of $R$ and the label $a$ respectively.
\begin{proof}
Recall $\f{a,b}(s)= 
\I(a \in R) \I(b \notin R)f(s[a], s[b])$.
Hence, if $a \notin R$ or $b \in R$, then $\f{a,b}$ becomes a zero function, and the inequality trivially holds. 

Suppose $a \in R$ and $b \notin R$. 
By definition of $u^\gamma_a$ and $u^\gamma_R$, we have 
\[
u^\gamma_a[a] - u^\gamma_a[b] 
=
u^\gamma_R[a] - u^\gamma_R[b]
= \gamma,
\]
from which we can deduce 
\[
u^\gamma_a[a] - u^\gamma_R[a]
=
u^\gamma_a[b] - u^\gamma_R[b]
=: \Delta > 0
\text{ and }
\sum_{l\in[m] - \{a, b\}}
    u^\gamma_a[l] - u^\gamma_R[l]
=
-2\Delta
.
\]
Furthermore, observe that if $l \notin \{a, b\}$, then 
$\f{a, b}(s) = \f{a, b}(s + e_l)$. 
From this, we can infer
\begin{align*}
\E_{e_{l^\prime} \sim u^\gamma_a}[\f{a,b}(s+e_{l^\prime})] 
-
\E_{e_l \sim u^\gamma_R}[\f{a,b}(s+e_l)]
&=
\sum_{l \in [m]}(u^\gamma_a[l] - u^\gamma_R[l]) \f{a, b}(s+e_l) \\
&=
\Delta \cdot\bigg(
\f{a, b}(s+e_a)
+
\f{a, b}(s+e_b)
-
2\f{a, b}(s)
\bigg).
\end{align*}
Using the uncrossability and convexity of $f$, we can show
\[
\f{a, b}(s+e_a)
+
\f{a, b}(s+e_b)
\ge
2\f{a, b}(s+\frac{1}{2}(e_a +e_b))
\ge
2\f{a, b}(s),
\]
which finishes the proof.

\end{proof}
\end{lemma}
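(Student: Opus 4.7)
My plan is to prove the lemma by first reducing to the nontrivial case, then exploiting the sparse dependence structure of $\f{a,b}$, and finally applying convexity and uncrossability in sequence. The trivial case is immediate: by the very definition $\f{a,b}(s) = \I(a \in R)\I(b \notin R)f(s[a], s[b])$, if either $a \notin R$ or $b \in R$, then $\f{a,b}$ is identically zero and both expectations vanish. So from here on I would assume $a \in R$ and $b \notin R$.

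The key observation is that $\f{a,b}(s)$ depends on $s$ only through the coordinates $s[a]$ and $s[b]$; hence $\f{a,b}(s+e_l) = \f{a,b}(s)$ for every $l \notin \{a,b\}$. Writing the expected difference as a single linear combination
$$\sum_{l \in [m]}\bigl(u^\gamma_a[l] - u^\gamma_R[l]\bigr)\f{a,b}(s+e_l),$$
the off-support terms collapse to $\f{a,b}(s)$ multiplied by $\sum_{l \notin \{a,b\}}(u^\gamma_a[l] - u^\gamma_R[l])$. A short direct computation from the definitions of the two biased uniforms yields $u^\gamma_a[a] - u^\gamma_R[a] = u^\gamma_a[b] - u^\gamma_R[b] =: \Delta \ge 0$, and since both distributions sum to one the remaining coordinates must contribute exactly $-2\Delta$. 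Substituting, the entire expected difference collapses to
$$\Delta \bigl(\f{a,b}(s+e_a) + \f{a,b}(s+e_b) - 2\f{a,b}(s)\bigr).$$

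It then remains to show the bracketed quantity is non-negative, which I would do by chaining Jensen's inequality with uncrossability:
$$\tfrac{1}{2}\bigl(\f{a,b}(s+e_a) + \f{a,b}(s+e_b)\bigr) \ge \f{a,b}\bigl(s + \tfrac{1}{2}(e_a + e_b)\bigr) \ge \f{a,b}(s),$$
where the first step uses convexity of $\f{a,b}$ and the second is uncrossability applied with $\beta = 1/2$. The main subtlety is recognizing that both assumptions are genuinely necessary here: convexity alone only yields the midpoint bound but does not compare it back to $\f{a,b}(s)$, while uncrossability is precisely the tool that rules out the ``cheating'' scenario where a weak learner places equal mass on a relevant and an irrelevant label and thereby dodges any loss. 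The rest of the argument is bookkeeping to verify $\Delta \ge 0$ (with equality when $|R| = 1$, in which case $R = \{a\}$ and the two distributions coincide so the inequality is trivial).
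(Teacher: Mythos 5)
Your proposal is correct and follows essentially the same route as the paper's proof: reduce to the case $a\in R$, $b\notin R$, collapse the difference of expectations to $\Delta\bigl(\f{a,b}(s+e_a)+\f{a,b}(s+e_b)-2\f{a,b}(s)\bigr)$ using the two-coordinate dependence of $\f{a,b}$, and finish by chaining convexity with uncrossability at $\beta=1/2$. Your remark that $\Delta\ge 0$ with equality exactly when $R=\{a\}$ is in fact slightly more careful than the paper, which asserts $\Delta>0$; either way the conclusion stands.
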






\subsubsection{Proof of Proposition~\ref{mylemma}}
\primelemma*

\label{sec:pot_bound_proof}
\begin{proof}
We note that $\Phi$ is the sum of many pairwise potential functions, each of which uses a convex and proper pairwise function. By nature of potential functions, these smaller potentials must be proper and convex thus $\Phi$ must be as well.

To prove the upper bound, we first expand $\Upsilon$ and $\Phi$ as the sum over pairs of potential functions:
\begin{align*}
\Upsilon^N_t(s) &= \sum_{a \in R_t} \sum_{b \notin R_t}
    \varphi^N_{u^\gamma_{R_t}}(s, \f{a,b}) \\
\Phi^N_t(s) &= \sum_{a \in R_t} \sum_{b \notin R_t}
    \Lambda^{a,b,N}_t(s).
\end{align*}

Lemma \ref{recurs_bound} implies
$$
\varphi^1_{u^\gamma_{R_t}}(s, \f{a, b}) 
\leq 
\Lambda^{a, b, 1}_t(s).
$$
We also note that $\Lambda^{a, b, N}_t(s)$ is also proper, uncrossable, and convex in its scores, and thus
$$
\varphi^2_{u^\gamma_{R_t}}(s, \f{a, b}) 
\leq 
\Lambda^{a, b, 2}_t(s).
$$
We can repeat this process recursively to obtain for any $N$ that
$$
\varphi^N_{u^\gamma_{R_t}}(s, \f{a,b}) \leq \Lambda^{a,b,N}_t(s)
$$
and finally sum across all pairs in $R_t \times R^c_t$ to obtain the desired inequality.
\end{proof}

\subsection{Proof of Theorem~\ref{bbm_bound}}
\label{opt_proof}
\bbmbound*
\begin{proof}
We first show a recurrence relation involving $\Lambda$.
\begin{align*}
\Lambda^{a, b, N-i}_{t}(s^i_t) &= \varphi^{N-i}_{u^\gamma_a}(s^i_t, \f{a, b}) \\
&\geq \E_{e_l \sim u^\gamma_{R_t}}\varphi^{N-i-1}_{u^\gamma_a}
(s^i_t + e_l, \f{a, b}) \\
    &= \E_{e_l \sim u^\gamma_{R_t}}\Lambda^{a, b, N-i-1}_t(s^{i}_t+e_l),
\end{align*}
where the inequality holds from Lemma 7.
Then, summing across all pairs of elements, we have that
\begin{align*}
\Phi^{N-i}_{t}(s^i_t) &= \sum_{a \in R_t} \sum_{b \notin R_t} \Lambda^{a, b, N-i}_{t}(s^{i}_t) \\ 
&\geq \E_{e_l \sim u^\gamma_{R_t}} \bigg[ \sum_{a \in R_t} \sum_{b \notin R_t}
\Lambda^{a, b, N-i-1}_t(s^{i}_t+e_l)
\bigg] \\
&= c^{i+1}_t \cdot u^\gamma_{R_t} \\
&= c^{i+1}_t \cdot (u^\gamma_{R_t} - h^{i+1}_t) + c^{i+1}_t \cdot h^{i+1}_t \\
&\geq c^{i+1}_t \cdot (u^\gamma_{R_t} - h^{i+1}_t) + \Phi^{ N-i-1}_{t}(s^{i+1}_t) 
\end{align*}
where the last inequality holds due to the convexity of $\Phi$ and Jensen's inequality.

Then summing over $t$, we have that
$$
\sum_t \Phi^{N-i}_{t}(s^i_t) \geq \sum_t c^{i+1}_t \cdot (u^\gamma_{R_t} - h^{i+1}_t) + \Phi^{ N-i-1}_{t}(s^{i+1}_t).
$$
and with probability at least $1-\delta$, the $\WLC(\gamma, \delta, S)$ holds, and we have that
$$
\sum_t \Phi^{N-i}_t(s^i_t) -  \Phi^{ N-i-1}_{t}(s^{i+1}_t) \geq -S.
$$

Then summing across all $N$, by the telescoping rule we have that
$$
\sum_t \Phi^N_{t}(\textbf{0}) + NS \geq \sum_t \Phi^0_{t}(s^N_t) = \sum_t L(s^N_t, R_t).
$$
Simplifying and noting that $S = \Otilde(\frac{2m^2}{\rho}z)$ proves the theorem.
\end{proof}

\subsection{Proof of Corollary \ref{opt_corollary}}
\label{opt_corollary_proof}
We first prove a lemma regarding potential functions and the hinge loss. 
It modifies Lemma 8 by \citet{jungMultilabel}.
\begin{lemma}
\label{pot_zero_bound}
With the unweighted hinge loss as the loss function, we have that
$$
\Phi^N_t(0) \leq (N+1)\frac{m^2}{4}\exp(-\frac{\gamma^2N}{2}).
$$
\begin{proof}
For convenience, we drop the subscript $t$ from this proof. Recalling that $\Phi^N$ is the sum of many $\Lambda^{a, b, N}$, where $a \in R_t, b \notin R_t$, we bound each $\Lambda$ on its own first. Let $X^N$ be the result of the random walk from the potential function $\Lambda$ is defined with, sampling $N$ times from $u^\gamma_a$ and adding the result to $0$. Then we can rewrite $\Lambda$ as
\begin{equation}
\label{lamda_bound_inside}
\begin{aligned}
\Lambda^{a, b, N}(0) &= 
    \E_{X^N}[\max\{0, X^N[b] - X^N[a]\}] \\
    &= \sum_{n=0}^N \Pr[X^N[b] - X^N[a] \geq n] \\
    &\leq (N+1)\Pr[X^N[b] - X^N[a] \geq 0].
\end{aligned}
\end{equation}
Next, we define the probabilities $p \coloneqq u^\gamma_a[a], q \coloneqq u^\gamma_a[b]$, and we interpret $\Pr[X^N[b] - X^N[a] \geq 0]$ using a game, where with probability $p$ we draw $-1$, while with probability $q$ we draw $1$. Then $\Pr[X^N[b] - X^N[a] \geq 0]$ equals the probability that the summation of $N$ i.i.d.\ random numbers are non-negative. Thus we can apply Hoeffding's inequality to obtain
$$
\Pr[X^N[b] - X^N[a] \geq 0] \leq \exp(-\frac{\gamma^2N}{2})
$$
and plugging back into Eq. \ref{lamda_bound_inside} to obtain
$$
\Lambda^{a, b, N}(0) \leq (N+1)\exp(-\frac{\gamma^2N}{2}).
$$
Lastly, we recall that $\Phi^N_t$ sums over every pair of relevant and irrelevant label using $\Lambda$. This requires us to note that the maximum possible number of such pairs is $\frac{m^2}{4}$, when $\frac{m}{2}$ of the labels are relevant. Multiplying $\Lambda^{a,b,N}(0)$ by the maximum possible number of pairs provides our bound.
\end{proof}
\end{lemma}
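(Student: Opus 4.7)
The plan is to bound each pairwise term $\Lambda^{a,b,N}(0)$ in the sum defining $\Phi^N_t(0)$ by a product of $(N+1)$ and an exponentially small tail probability, and then sum over pairs using the elementary combinatorial inequality $|R_t|(m-|R_t|) \leq m^2/4$.

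Fix $a \in R_t$ and $b \notin R_t$. By definition of $\Lambda$ and of the unweighted hinge loss, $\Lambda^{a,b,N}(0) = \E[\max\{0, X^N[b] - X^N[a] + 1\}]$ where $X^N$ is the sum of $N$ i.i.d.\ basis vectors drawn from $u^\gamma_a$. I would first observe that $X^N[b] - X^N[a]$ is an integer-valued walk whose per-step increment equals $-1$ with probability $u^\gamma_a[a] = (1-\gamma)/m + \gamma$, $+1$ with probability $u^\gamma_a[b] = (1-\gamma)/m$, and $0$ otherwise. Applying the identity $\E[Z] = \sum_{n=1}^\infty \Pr[Z \geq n]$ for non-negative integer random variables and reindexing to absorb the ``$+1$'' shift, the expectation rewrites as $\sum_{n=0}^N \Pr[X^N[b] - X^N[a] \geq n]$. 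Bounding each of the $N+1$ terms by the (monotonically largest) $n=0$ term then yields $\Lambda^{a,b,N}(0) \leq (N+1)\Pr[X^N[b] - X^N[a] \geq 0]$.

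The next step is the tail bound. The per-step drift of $X^N[b] - X^N[a]$ is $u^\gamma_a[b] - u^\gamma_a[a] = -\gamma$, and each summand lies in $[-1,1]$. Hoeffding's inequality therefore gives $\Pr[X^N[b] - X^N[a] \geq 0] = \Pr[(X^N[b]-X^N[a]) - (-\gamma N) \geq \gamma N] \leq \exp(-2(\gamma N)^2/(4N)) = \exp(-\gamma^2 N/2)$. Combining with the previous step yields the pairwise bound $\Lambda^{a,b,N}(0) \leq (N+1)\exp(-\gamma^2 N/2)$. Finally, summing $\Phi^N_t(0) = \sum_{a \in R_t} \sum_{b \notin R_t} \Lambda^{a,b,N}(0)$ and noting that the number of such pairs is $|R_t|(m - |R_t|)$, which is maximized at $|R_t|=m/2$ giving at most $m^2/4$ pairs, delivers the claimed bound.

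The technical heart of the argument is the Hoeffding tail bound, which becomes essentially routine once the $\pm 1 / 0$ random walk and its drift $-\gamma$ are identified. The main subtlety to be careful about is the reindexing that absorbs the ``$+1$'' inside the hinge into the summation limits, so that the tail-probability sum runs from $n=0$ to $n=N$ (giving the factor $N+1$) rather than from $n=1$ to $n=N+1$.
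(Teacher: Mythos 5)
Your proposal is correct and follows essentially the same route as the paper: decompose $\Phi^N_t(0)$ into the pairwise potentials $\Lambda^{a,b,N}(0)$, rewrite each as a tail-probability sum bounded by $(N+1)\Pr[X^N[b]-X^N[a]\geq 0]$, apply Hoeffding to the $\pm 1/0$ walk with drift $-\gamma$, and multiply by the at most $m^2/4$ relevant--irrelevant pairs. Your explicit handling of the ``$+1$'' reindexing and of the Hoeffding constants is in fact slightly more careful than the paper's write-up, which drops the $+1$ in its first display line, but the argument is the same.
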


Now we are ready to prove Corollary~\ref{opt_corollary}.
\optcorollary*
\begin{proof}
Firstly, we note that if we decide to explore, then the number of additional pairs that could be dis-ordered is no greater than $m^2$. Noting that the worst case scenario upper bounds the expected loss from exploration, we have the inequality
$$
\sum_{t=1}^T \E_{\ytilde_t}[\Rank(\ytilde_t, R_t)] - \Rank(s^N_t, R_t) \leq \rho m^2T.
$$

Then, we note that the random additional unweighted rank loss we obtain from exploration is bounded each round by $m^2$ because this upper bounds the unweighted rank loss.
We apply concentration inequalities and rearrange, so that with probability at least $1-\delta$, an upper bound on the deviation of our true exploration loss from the expected exploration loss is
$$
\sum_{t=1}^T \Rank(\ytilde_t, R_t) - \E_{\ytilde_t}[\Rank(\ytilde_t, R_t)] \leq m^2\sqrt{T\log\frac{1}{\delta}}.
$$

Combining these inequalities and plugging into Theorem \ref{bbm_bound} 
we have
\begin{align*}
\sum_t \Rank(\ytilde_t, R_t) &\leq \sum_t \Rank(s^N_t, R_t) + \rho m^2T + m^2 \sqrt{T\log\frac{1}{\delta}} \\
&\leq \Phi^N_t(0)T + \Otilde(\frac{2m^2 N^2}{\rho}) +\rho m^2T + m^2 \sqrt{T\log\frac{1}{\delta}}
\end{align*}
where in obtaining the expression inside $\Otilde$ we use that each pairwise function in the unweighted hinge loss is upper bounded by $N$.

Finally, using Lemma \ref{pot_zero_bound} we can produce our final bound.

\begin{align*}
\sum_t \Rank(\ytilde_t, R_t)
&\leq \frac{m^2}{4}(N+1)\exp(-\frac{\gamma^2 N}{2})T + \rho m^2 T  + \Otilde(\frac{2m^2}{\rho}(N^2 + \sqrt{T})) \\
&\leq \frac{m^2}{4}(N+1)\exp(-\frac{\gamma^2 N}{2})T + \rho m^2 T + \Otilde(\frac{2m^2}{\rho}N^2\sqrt{T}).
\end{align*}

\end{proof}

\begin{remark}
The factor of $m^2$ when accounting for the number of additional pairs that could be dis-ordered due to uniform randomization can be improved to a factor of $m$ by using an alternative randomization scheme. See Appendix \ref{sec:alt_rand} for a more detailed discussion of the trade-offs between different randomization schemes. 
\end{remark}

\subsection{Proof of Theorem~\ref{ada_bound}}
\label{sec:adaptiveProof}
\adabound*
\begin{proof}We start by defining the unbiased estimate  and true unweighted rank loss suffered by the
$i$th expert as
\begin{gather*}
\Mhat_i = \sum_t \hatrankloss(s^i_t, R_t) 
\text{ and }
M_i = \sum_t \Rank(s^i_t, R_t).
\end{gather*}
Recall that our unbiased estimator for unweighted rank loss is bounded by 
$\Otilde(\frac{2m^2}{\rho})$.
We let $\Mhat_0 = \sum_t \Lhat(0, R_t)$ and $M_0 = \sum_t \Rank(0, R_t)$. If we write $i^* = \argmin_i M_i$, then we have by
the concentration inequality, with probability at least $1-\delta$ that
$$
\min_i \Mhat_i \leq \Mhat_{i^*} \leq \min_i M_i + \Otilde(\frac{2m^2}{\rho})\sqrt{T\log\frac{1}{\delta}} = \min_i M_i + \Otilde(\frac{2m^2}{\rho}\sqrt{T}).
$$
Because the booster chooses an expert through the Hedge algorithm and because we feed the hedge algorithm $\Mhat_i$ each round, a standard analysis as in Corollary 2.3 of \cite{hedge_analysis} gives that
\begin{gather}
\label{hedge}
\sum_t \Rank(\yhat_t, R_t) \leq \sum_t \hatrankloss(\yhat_t, R_t) + \Otilde(\frac{2m^2}{\rho}\sqrt{T}) \leq 2 \min_i M_i + 2\log N + \Otilde(\frac{2m^2}{\rho}\sqrt{T}).
\end{gather}

where the inequality between $\Rank$ and $\hatrankloss$ is again using concentration inequalities, and with another probability at least $1-\delta$.

Now we check that $\frac{1}{1+\exp(a-b)} \geq \frac{1}{2}\I(a \leq b)$, so that
\begin{equation}
\label{w_bound}
\begin{aligned}
w^i[t] &\geq \frac{1}{2}\Rank(s^{i-1}_t, R_t) 
\text{ and }
\norm{w^i}_1  &\geq \frac{M_{i-1}}{2}
\end{aligned}
\end{equation}
with $w^i[t]$ defined as in Eq. \ref{edge_define}.

Now we let $\Delta_i$ denote the difference between cumulative logistic loss
between two experts
\begin{align*}
\Delta_i &= \sum_t \Logloss(s^i_t, R_t) - \Logloss(s^{i-1}_t, R_t) \\
    &= \sum_t \Logloss(s^{i-1}_t + \alpha^i_t h^i_t, R_t) - \Logloss(s^{i-1}_t, R_t).
\end{align*}
Then a standard analysis of stochastic gradient descent \citep{SGD} provides that, with probability at least
$1-\delta$
\begin{equation}
\label{sgd}
\begin{aligned}
\Delta_i \leq &\min_{\alpha \in [-2, 2]}
\sum_t \bigg[ \Logloss(s^{i-1}_t + \alpha h^i_t, R_t) - \Logloss(s^{i-1}_t, R_t) \bigg]
+ \Otilde(\frac{2m^2}{\rho}\sqrt{T})
\end{aligned}
\end{equation}

We now record a useful inequality
\begin{align*}
\log(1 + e^{s+\alpha}) - \log(1 + e^s) &= \log(1 + \frac{e^\alpha-1}{1+e^{-s}}) \leq \frac{1}{1+e^{-s}}(e^\alpha - 1).
\end{align*}

Using this we can expand
\begin{align*}
\sum_t \Logloss(s^{i-1}_t + \alpha h^i_t, R_t) - \Logloss(s^{i-1}_t, R_t) 
    &= \sum_t \sum_{a \in R_t} \sum_{b \notin R_t} 
        \log \frac{1 + \exp(s^{i-1}_t[b] - s^{i-1}_t[a] + \alpha(h^i_t[b] - h^i_t[a]))}{1+ \exp(s^{i-1}_t[b] - s^{i-1}_t[a])} \\
    &\leq \sum_t \sum_{a \in R_t} \sum_{b \notin R_t} \frac{\exp(\alpha(h^i_t[b] - h^i_t[a]))-1}{1+ \exp(s^{i-1}_t[a]-s^{i-1}_t[b])} \coloneqq f(\alpha)
\end{align*}
We also rewrite $\norm{w^i}_1$ and $\gamma_i$ as the following:
\begin{equation}
\label{redef_w_gamma}
\begin{aligned}
\norm{w^i}_1 &= \sum_t \sum_{a \in R_t} \sum_{b \notin R_t}
    \frac{1}{1+ \exp(s^{i-1}_t[a] - s^{i-1}_t[b])} \\
\gamma_i =& \sum_t \sum_{a \in R_t} \sum_{b \notin R_t}
    \frac{1}{\norm{w^i}_1} \frac{h^i_t[a] - h^i_t[b]}{1+ \exp(s^{i-1}_t[a] - s^{i-1}_t[b])}.
\end{aligned}
\end{equation}
For ease of notation, let $j$ be an index which loops over all tuples $(t, a, b)$ $\in$ $[T] \times R_t \times R^c_t$,
and let $a_j$ and $b_j$ be the following terms:
\begin{align*}
a_j &= \frac{1}{\norm{w^i}_1}\frac{1}{1+ \exp(s^{i-1}_t[a] - s^{i-1}_t[b])} \\
b_j &= h^i_t[a] - h^i_t[b].
\end{align*}
Then from Eq. \ref{redef_w_gamma} we have that $\sum_j a_j = 1$ and $\sum_j a_j b_j = \gamma_i$. Then, we can express $f(\alpha)$ using $a_j$ and $b_j$ as
\begin{align*}
\frac{f(\alpha)}{\norm{w^i}_1} &= \sum_j a_j(\exp(-\alpha b_j) - 1) \\
    &\leq \exp(-\alpha \sum_j a_j b_j) - 1  \\
    &= \exp(-\alpha \gamma_i) - 1
\end{align*}
where the inequality holds by Jensen's. From this, we can deduce that
\begin{gather}
\label{conv_opt}
\min_{\alpha \in [-2, 2]} f(\alpha) \leq
    -\frac{|\gamma_i|}{2}\norm{w^i}_1.
\end{gather}

Then combining equations \ref{w_bound}, \ref{sgd}, and \ref{conv_opt}, we have
$$
\Delta_i \leq -\frac{|\gamma_i|}{4}M_{i-1} + \Otilde(\frac{2m^2}{\rho}\sqrt{T}).
$$
Summing over $i$, we get by telescoping rule
\begin{align*}
\sum_t \Logloss(s^N_t, R_t) - \Logloss(0, R_t) 
&\leq 
-\frac{1}{4}\sum_{i=1}^N |\gamma_i|M_{i-1} + \Otilde(\frac{2m^2}{\rho}N\sqrt{T}) \\
&\leq
-\frac{1}{4}\sum_{i=1}^N |\gamma_i|\min_i M_{i} + \Otilde(\frac{2m^2}{\rho}N\sqrt{T}).
\end{align*}
Then, solving for $\min_i M_i$, we obtain
\begin{align*}
\min_i M_i &\leq 
\sum_t \frac{4\Logloss(0, R_t)}{\sum_i |\gamma_i|} + \Otilde(\frac{2m^2N}{\rho \sum_i |\gamma_i|}\sqrt{T}) \\
&\leq
\frac{m^2 \log 2}{\sum_i |\gamma_i|}T + \Otilde(\frac{2m^2N}{\rho \sum_i |\gamma_i|}\sqrt{T}),
\end{align*}
where we use that in $\Logloss(0, R_t)$, each pairwise function must evaluate to $\log 2$, and there can be at most
$\frac{m^2}{4}$ pairwise functions. Plugging this result into Eq. \ref{hedge}, we have
$$
\sum_t \Rank(\yhat_t, R_t) \leq
\frac{2 m^2 \log 2}{\sum_i |\gamma_i|}T + \Otilde(\frac{2m^2N}{\rho \sum_i |\gamma_i|}\sqrt{T}).
$$

Finally, since we predict with $\ytilde_t$ instead of $\yhat_t$, we must add in the loss from exploration. We follow the same steps as in the proof of Corollary \ref{opt_corollary} to obtain
$$
\sum_t \Rank(\ytilde_t, R_t) \leq
\frac{2 m^2 \log 2}{\sum_i |\gamma_i|}T
+ \rho m^2 T +
\Otilde(\frac{2m^2N}{\rho \sum_i |\gamma_i|}\sqrt{T}).
$$

\end{proof}

\subsection{Additional Proofs}
\label{loss_properties_proof}
\begin{lemma}
The logistic and hinge unweighted are pairwise decomposable into functions which are proper, convex, and uncrossable.
\begin{proof}
We first note that for any $a \in R, b \notin R$, we can write the logistic and hinge losses as
\begin{gather*}
\begin{aligned}     
\Logloss(s, R) &= \sum_{a \in R} \sum_{b \notin R}
    \log(1 + \exp(s[b] - s[a])), \text{ and} \\
\hinge(s, R) &= \sum_{a \in R} \sum_{b \notin R}
    \max\{0, 1 + s[b] - s[a]\}.
\end{aligned}
\end{gather*}
In both cases, adding a constant scalar to the scores for both $a$ and $b$ will cancel itself out, thus satisfying the uncrossability. 
Also, the fact that the functions inside the summations are proper and convex finishes the proof.
\end{proof}
\end{lemma}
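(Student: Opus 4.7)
The plan is to verify, for each of the two losses, the three structural properties required (pairwise decomposability, properness, uncrossability, convexity), by exhibiting the pairwise function $f(s[a],s[b])$ explicitly and checking each property on that one function.

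First, I would write each loss as a double sum over $(a,b)\in R\times R^c$, giving $\Logloss(s,R)=\sum_{a\in R}\sum_{b\notin R}\log(1+\exp(s[b]-s[a]))$ and $\hinge(s,R)=\sum_{a\in R}\sum_{b\notin R}\max\{0,1+s[b]-s[a]\}$. These already exhibit the required form $\sum_{a,b}\f{a,b}(s)$ with $\f{a,b}(s)=\I(a\in R)\I(b\notin R)f(s[a],s[b])$, so pairwise decomposability is immediate. Note that in both cases $f(s[a],s[b])=g(s[b]-s[a])$ depends on the two relevant scores only through the difference $s[b]-s[a]$, with $g(u)=\log(1+e^u)$ in the logistic case and $g(u)=\max\{0,1+u\}$ in the hinge case. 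Both choices of $g$ are convex and non-decreasing on $\mathbb{R}$.

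From this difference-only representation, properness is immediate: $\partial_{s[a]} f=-g'(s[b]-s[a])\leq 0$ and $\partial_{s[b]} f=g'(s[b]-s[a])\geq 0$, since $g$ is non-decreasing (in the hinge case I would argue with subgradients rather than derivatives at the kink, but the monotonicity conclusion is the same). Uncrossability is likewise a one-line check: $f(s[a]+\beta,s[b]+\beta)=g((s[b]+\beta)-(s[a]+\beta))=g(s[b]-s[a])=f(s[a],s[b])$ for every $\beta>0$, giving equality and hence the required $\geq$ inequality. Convexity in the full score vector $s$ follows because $f$ is the composition of the convex function $g$ with the linear map $s\mapsto s[b]-s[a]$, and composing a convex non-decreasing function with a linear function yields a convex function of $s$.

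The proof is essentially mechanical, so I do not anticipate a real obstacle. The only place where one has to be slightly careful is the hinge case at the kink $s[b]-s[a]=-1$: strictly speaking, properness should be phrased as monotonicity (not differentiability) to cover the non-smooth point, and convexity should be argued as a pointwise maximum of affine (hence convex) functions rather than via second derivatives. Both adjustments are standard and change nothing substantive. Once these remarks are in place the three assumptions hold for both $\f{a,b}$, which completes the lemma.
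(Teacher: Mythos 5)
Your proposal is correct and follows essentially the same route as the paper: exhibit the double-sum decomposition, observe that the pairwise function depends only on the difference $s[b]-s[a]$ so that a common shift cancels (uncrossability, in fact with equality), and note that the summands are proper and convex. You simply spell out the monotonicity and convexity checks that the paper asserts in one line; the extra care at the hinge kink is fine but changes nothing substantive.
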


\section{ALTERNATE RANDOMIZATION PROCEDURES}
\label{sec:alt_rand}
While the uniform exploration scheme is simple and has empirically shown to perform well, the induced additional error on the rank loss and the $b$-boundedness of its loss estimator are theoretically sub-optimal. Here, we give an alternative randomization procedure that improves upon the bounds of the uniform exploration scheme, but is more sophisticated and requires that $k \geq 3$.

\subsection{Single-Swap Randomization}

Consider the following single-swap randomization scheme: after computing $r_t$, with probability $1-\rho$ we use $r_t$ as our final ranking. Otherwise, with probability $\rho$, we choose one label from $\mathcal{T}^k(r_t)$ and one label from $\mathcal{T}^k(r_t)^c$ (the set of labels which have rank lower than k) and swap them. Let $\hat{r}_t$ be the randomized ranking after a single swap. Then, we repeat this process once more on $\hat{r}_t$,  selecting one element from $\mathcal{T}^k(\hat{r}_t)$ and one from $\mathcal{T}^k(\hat{r}_t)^c$ and swapping them. Let $\tilde{r}_t$ represent the final randomized ranking after this last step. 

\begin{lemma}
Suppose our loss function $L$ is pairwise decomposable, $\rho < 0.25$, and thus has an unbiased estimator $\hat{L}$ If there exists $z$ such that $f^{a, b}(s) \leq z$ for all feasible $s$ available to the booster,  then under the single-swap randomization we have 
$$
|\hat{L}(s, R_t) - L(s, R_t)| = \mathcal{O}(z\frac{ 2m^2-k^2}{\rho}) \text{ almost surely}.
$$
\end{lemma}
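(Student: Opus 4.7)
The plan is to mirror the structure of the proof of Lemma~\ref{unb_est_bound}. I would split $|\hat{L}(s,R_t) - L(s,R_t)|$ into an \emph{underestimation} contribution (from pairs that the estimator fails to activate, whose worst-case realization takes $f^{a,b}(s) = z$ outside the top-$k$ and $0$ inside) and an \emph{overestimation} contribution (from pairs both inside $\Top^k(\rtilde_t)$, weighted by $\tfrac{1}{p_{ab}} - 1$, where $p_{ab} = \Pr[a,b\in\Top^k(\rtilde_t)]$). The underestimation piece is identical to the uniform case and is bounded by $(m^2 - k^2)z$. All the novelty is therefore in controlling $\sum_{a,b \in \Top^k(\rtilde_t)} z/p_{ab}$, and the task is to lower-bound $p_{ab}$ under the two-swap single-swap scheme rather than under a uniform permutation.

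The key structural observation is that two swaps change $\Top^k(\rtilde_t)$ from $\Top^k(r_t)$ by at most two elements. I would therefore partition $\Top^k(\rtilde_t) = A \cup B$, where $A \subset \Top^k(r_t)$ are the ``survivors'' and $B \subset \Top^k(r_t)^c$ are the ``newcomers,'' with $|B| = s \in \{0,1,2\}$, and classify the pairs in $\Top^k(\rtilde_t)^2$ as both in $A$ (at most $k(k-1)$ pairs), mixed between $A$ and $B$ (at most $4(k-2)$ ordered pairs, attained at $s=2$), and both in $B$ (at most $s(s-1)\le 2$ pairs). For each class I would lower-bound $p_{ab}$ through a simple sufficient event. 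For the both-in-$A$ class, the $1-\rho$ no-swap branch alone gives $p_{ab}\ge 1-\rho \ge 3/4$ from $\rho<1/4$. For a mixed pair with survivor $a$ and newcomer $b$, conditioning on the swap branch and using the sufficient event ``first swap picks $b$ from the outside and kicks out a top-$k$ element other than $a$, and the second swap's top-$k$ pick lies in $\Top^k(r_t)\setminus\{a\}$ and is not the element $b$ just inserted'' yields $p_{ab} \ge \rho\cdot\tfrac{(k-1)(k-2)}{k^2(m-k)}$; this is exactly where the hypothesis $k\ge 3$ is needed. For a both-in-$B$ pair, I would use ``one swap injects $a$ and the other injects $b$ without kicking the first target out'' and sum over the two possible orderings to get $p_{ab}\ge \rho \cdot \tfrac{2(k-1)}{k(m-k)^2}$.

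Summing $z/p_{ab}$ over the three classes and applying the elementary estimates $\tfrac{k^2}{(k-1)(k-2)}\le \tfrac{9}{2}$ and $\tfrac{k}{k-1}\le \tfrac{3}{2}$ for $k\ge 3$, together with the AM-GM inequality $(k-2)(m-k)\le m^2/4$, yields
$$
\sum_{a,b \in \Top^k(\rtilde_t)} \frac{z}{p_{ab}} \;\le\; \frac{4k^2 z}{3} \;+\; \frac{9 m^2 z}{2\rho} \;+\; \frac{3(m-k)^2 z}{2\rho} \;=\; \mathcal{O}\!\left(\frac{z\,m^2}{\rho}\right).
$$
Combining with the $(m^2-k^2)z$ underestimation bound and the trivial inequality $(m^2-k^2) + m^2/\rho \le (2m^2-k^2)/\rho$, which holds for $\rho\le 1$ because it rearranges to $(m^2-k^2)(1/\rho - 1)\ge 0$, produces the claimed $\mathcal{O}(z(2m^2-k^2)/\rho)$ bound.

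The main obstacle is the both-in-$B$ class: the two swaps are sequential and the second swap's feasible pool depends on the outcome of the first, so the joint event ``both $a$ and $b$ end in the top-$k$'' does not factor across independent swaps. The plan is to isolate a restrictive but clean sub-event (one swap per target, in a fixed order, with the first-injected target protected in the second swap) whose probability factors by construction, and then multiply by $2$ to handle the two target orderings. The resulting per-pair bound is coarse---scaling like $(m-k)^2/\rho$---but this is harmless since at most two pairs fall in this class, so its total contribution is already absorbed into the final $\mathcal{O}(m^2/\rho)$ estimate.
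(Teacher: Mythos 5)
Your proposal is correct and follows essentially the same route as the paper's proof: the identical three-way case split on how many of $a,b$ lie in the original top-$k$ set, the same sufficient-event lower bounds on $\Pr[a,b\in\Top^k(\rtilde_t)]$ for each class (matching the paper's $\rho\frac{(k-1)(k-2)}{k^2(m-k)}$ and $\rho\frac{2(k-1)}{k(m-k)^2}$ exactly), and the same $(m^2-k^2)z$ underestimation term. The only differences are cosmetic (ordered versus unordered pair counts and slightly different final constants), so nothing further is needed.
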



\begin{proof}
We first bound the case where our estimator underestimates the true loss. In this case, the worst scenario would be all the pairwise loss functions which we activate evaluate to 0, while all other functions evaluate to $z$. In these cases, our bound is $O(z(m^2 - k^2))$ as our absolute deviation is proportional to $z$ times the number of element pairs that are not in the top-k. 

We now consider the case where our estimator overestimates the true loss. Let $V = \mathcal{T}^k(r_t)$ and $\tilde{V} = \mathcal{T}^k(\tilde{r}_t)$, where $\tilde{r}_t$ is the \textit{final} randomized ranking as described above. Let $\hat{V} = \mathcal{T}^k(\hat{r}_t)$. We now separately bound $\frac{1}{Pr[a,b \in \tilde{V}]} - 1$ for the cases when (1) $a,b \in V$ (2) one of $a$ and $b$ is in $V$ and (3) when $a,b \notin V$. Notice that trivially, $\frac{1}{Pr[a,b \in \tilde{V}]} - 1 \leq \frac{1}{Pr[a,b \in \tilde{V}]}$, and thus for cases (2) and (3) we only bound $\frac{1}{Pr[a,b \in \tilde{V}]}$.


Suppose $a,b \in V$. Then since we only explore with probability $\rho \leq 0.25$, and $\frac{1}{Pr[a,b \in \tilde{V}]} \geq 1 - \rho$ and  $\frac{1}{Pr[a,b \in \tilde{V}]} - 1 \leq \frac{1}{3}$. In addition, there must be fewer than $k^2$ of these pairs.

Secondly, suppose exactly one of $a$ and $b$ is in $V$ . Without loss of generality, let us assume $a \in V$ and $b \notin V$. Then, for both $a$ and $b$ to be in $\tilde{V}$, the algorithm must decide to explore with probability $\rho$. Then, one possible way of bringing $b$ into $\tilde{V}$ is by swapping $b$ with a label $c (\neq a) \in V$ in the first round, and then swapping a label $d (\neq a, b) \in \hat{V}$ with any label from $\hat{V}^c$. From this, we obtain,

\[Pr[a,b \in \tilde{V}] \geq \rho \cdot \frac{k-1}{k} \cdot \frac{1}{m-k} \cdot \frac{k-2}{k}\]

\noindent Therefore, 

\[\frac{1}{Pr[a,b \in \tilde{V}]} - 1 \leq \frac{k^2(m-k)}{\rho(k-2)(k-1)}\]

\noindent Here, the upper bound on the total number of such pairs is $2(k-2)$, as over the two rounds of swapping, at most two different labels can be transplanted from outside the top-$k$. However, note that if $k = 2$, this case can never occur, as after the second swap it is guaranteed that one of $a$ or $b$ is swapped out of $\hat{V}$. This is why the following scheme requires that $k \geq 3$.

Lastly, suppose $a, b \notin V$ . They both must be chosen to be moved up when the algorithm decides to explore with
probability $\rho$. Thus, in the first round of randomization, we must choose either $a$ or $b$, and then, in the second round, we must not select the label that was chosen in the first round and must select the label that was not chosen in the first round. Formally, 

\[Pr[a,b \in \tilde{V}] = \rho \cdot \frac{2}{m-k} \cdot \frac{1}{m-k} \cdot \frac{k-1}{k}\]

Which implies, 

\[\frac{1}{Pr[a,b \in \tilde{V}]} - 1 \leq \frac{k(m-k)^2}{\rho(k-1)}\]

There at most can only be one of these pairs present, because we only ever choose at most two different labels from outside of the top-$k$ to swap. Now, to produce our final bound, we multiply the weight produced from each case by the number of times it can occur, to obtain the sum:

     \begin{align*}
        G &\leq z\left[ \frac{1}{3}k^2 + 2(k-2)\frac{k^2(m-k)}{\rho(k-2)(k-1)} + \frac{k(m-k)^2}{\rho(k-1)}\right] \\
        &= z\left[ \frac{1}{3}k^2 + \frac{2k^2(m-k)}{\rho(k-1)} + \frac{k(m-k)^2}{\rho(k-1)}\right] \\
        &= z\left[ \frac{1}{3}k^2 + \frac{k}{k-1}\cdot \frac{m^2 - k^2}{\rho}\right] \\
        &\leq z\left[ \frac{1}{3}k^2 +  \frac{2(m^2 - k^2)}{\rho}\right] \\
        &= z\left[ \frac{2(m^2 - k^2) + \frac{1}{3}k^2\rho}{\rho} \right] \\
        &\leq z\left[ \frac{2(m^2 - k^2) + \frac{1}{3}k^2}{\rho} \right] \leq z\left[ \frac{2m^2 - k^2}{\rho} \right] = \mathcal{O}(z\frac{2m^2 - k^2}{\rho}) \\
    \end{align*}
    
The middle inequality follows from the restriction that $k \geq 2$ and the last inequality follows from $\rho \leq 0.25$.  

\end{proof}

Note, that this bound improves upon the $b$-boundedness of the uniform randomization scheme, but requires that $k \geq 3$. 
In addition, unlike the uniform exploration scheme, if we decide to explore using the single-swap scheme, then the number of additional label pairs, and thus the additional rank loss, that could be disordered due to exploration is no greater than $2m$. To see this, we observe the worst case scenario for exploration, where the the top ranked label is relevant but is being swapped with the lowest ranked label, which is irrelevant. Immediately this provides our first newly disordered pair. Let $M$ be the set of labels not ranked first or last, in other words, all the other labels. For every irrelevant label in $M$, another pair associated with the formerly top-ranked and relevant label will become incorrect. Then, for every relevant label in $M$, another pair associated with the formerly lowest-ranked and irrelevant label will become incorrect after swapping. Thus for every member of $M$, an additional pair is disordered by the swap, which is upper bounded by $m$. Thus, the total number of pairs disordered is $|M|+1 = m-1 < m$. Finally, multiplying by the number of swaps gives us the upperbound on the number of disordered pairs. 

Although the single-swap scheme poses restrictions on $k$ and $m$, its improvement on the $b$-boundedness of the unbiased loss estimator and reduction in worst-case loss due to exploration directly implies better final loss bounds for the booster. More specifically, by using the single-swap randomization scheme, it is not hard to see that the final loss bound in the optimal setting is

\begin{align*}
\sum_t \Rank(\ytilde_t, R_t)
&\leq \frac{m^2}{4}(N+1)\exp(-\frac{\gamma^2 N}{2})T + 2\rho m T + \Otilde(\frac{2m^2-k^2}{\rho}N^2\sqrt{T})
\end{align*}

and the final loss bound in the adaptive setting is:

$$
\sum_t \Rank(\ytilde_t, R_t) \leq
\frac{2 m^2 \log 2}{\sum_i |\gamma_i|}T
+ 2\rho m T +
\Otilde(\frac{(2m^2-k^2)N}{\rho \sum_i |\gamma_i|}\sqrt{T}).
$$

These can be directly verified by following the proofs presented in Appendix \ref{sec:detailed_proofs}, but instead using the values corresponding to the single-swap randomization scheme. We can see that compared to the loss bounds of the uniform randomization scheme, the factor of $m^2$ of the middle term is reduced to a factor of $m$, and the excess loss bound shows a more favorable dependence on $k$.


\subsection{Experiments}

To compare the performance of using single-swap randomization scheme to the uniform randomization scheme, we rerun the same experiments described in section 4 using single-swap randomization. Table \ref{tab:uniform_single_swap} gives the weighted average rank loss using single-swap randomization after optimizing hyperparameters. Here, \say{-S} refers to the single-swap randomization scheme and \say{-U} refers to the uniform randomization scheme.  We include again the results of the full information algorithm (denoted by prefix \say{Full}) for completeness sake. 
    
\begin{table}[t]
\centering
\caption{Average Weighted Rank Loss on Test Set using Single-Swap Randomization}
    \setlength\tabcolsep{2pt}
    \vspace{10pt}
    \begin{tabular}{cccccccc}
    \toprule
    Data &TopOpt-S &TopAda-S &TopOpt-U &TopAda-U &FullOpt &FullAda\\
    \midrule
    Emotions                  &0.19  &0.23 &0.20  &0.22   &0.17   &0.16\\ 
    Scene                   &0.11   &0.13 &0.11   &0.13   &0.07   &0.09 \\
    Yeast                     &0.22   &0.22  &0.23   &0.23   &0.18   &0.19 \\
    Mediamill                &--   &0.08 &--   &0.09   &--   &0.05 \\
    M-reduced          &0.11   &0.11 &0.11   &0.11   &0.06   &0.06\\
    \bottomrule
    \end{tabular}
    \label{tab:uniform_single_swap}
\end{table}

 Based on Table \ref{tab:uniform_single_swap}, we find that the average losses between the single-swap and uniform randomization schemes are very close across all datasets. Thus, while the single-swap randomization scheme enjoys better theoretical guarantees, its empirical advantage to the uniform randomization scheme are small.

\section{EXPERIMENT DETAILS}
\label{sec:experiment_details}
\subsection{Additional Metrics}

\begin{table}[t]
\centering
\caption{Mean and Standard Deviations of Average Weighted Rank Loss on Test Set}
    \setlength\tabcolsep{2pt}
    \vspace{10pt}
    \begin{tabular}{cccc}
    \toprule
    Data &$m$  &TopOpt &TopAda \\
    \midrule
    Emotions            &6       &$0.20\pm 0.013$  & $0.22 \pm 0.031$  \\
    Scene               &6      &$0.11 \pm 0.010$   &$0.13 \pm0.015$  \\
    Yeast               &14      &$0.23 \pm 0.0054$   &$0.23\pm0.0059$  \\
    Mediamill           &101     &--   &$0.092 \pm0.0011$ \\
    M-reduced   &101       &$0.11 \pm0.0087$   &$0.11 \pm0.0069$  \\
    \bottomrule
    \end{tabular}
    \label{tab:std}
\end{table}

\begin{table}[t]
\centering
\caption{Average Runtime in Seconds using Uniform Randomization on Test Set}
    \setlength\tabcolsep{2pt}
    \vspace{10pt}
    \begin{tabular}{cccc}
    \toprule
    Data &$m$  &TopOpt &TopAda \\
    \midrule
    Emotions            &6       &299  &189  \\
    Scene               &6      &415   &371  \\
    Yeast               &14      &334   &260  \\
    Mediamill           &101     &--   &4752  \\
    M-reduced   &101       &1852   &352  \\
    \bottomrule
    \end{tabular}
    \label{tab:runtime}
\end{table}

Tables \ref{tab:std} and \ref{tab:runtime} gives the standard deviation and average runtimes for both \optimalAlgo~ and \adaptiveAlgo~ algorithms on all datasets. We observe that for each dataset, the average runtime of the adaptive algorithm is significantly faster than its optimal counterpart. 

\subsection{Procedure}
We used the VFDT algorithms presented in \cite{VFDT} as the weak learners. Every algorithm used 100 trees whose parameters were randomly chosen. VFDT is trained using single-label data, and we fed individual relevant labels along with importance weights that were computed as $\max_{l} \hat{c}_t - \hat{c}_t[l]$. Instead of using all covariates, the booster fed to trees randomly chosen 20 covariates to make weak predictions less correlated. This procedure matches what was done by \cite{jungMultilabel}.

\begin{table}[ht]
    \centering
    \setlength\tabcolsep{20pt}
    \caption{Parameters for Algorithms in Format \textit{TopOpt/AdaOpt}}
    \vspace{10pt}
    \begin{tabular}{cccc}
        \toprule
         Dataset    &\# Weak Learners         &$\rho$             &\# Loops \\
         \midrule
         Emotions   &$50$/$50$      &$0.02$/$0.02$      &$20$/$10$ \\
         Scene      &$50$/$50$      &$0.02$/$0.04$      &$10$/$10$ \\
         Yeast      &$30$/$60$      &$0.03$/$0.04$      &$10$/$10$ \\
         Mediamill  &$10$/$10$      &$0.02$/$0.06$      &$20$/$20$ \\
         M-reduced  &$20$/$60$      &$0.04$/$0.06$      &$20$/$20$ \\
         \bottomrule
    \end{tabular}
    \label{tab:hyperparameters}
\end{table}

\begin{table}[ht]
    \centering
    \caption{Summary of Datasets}
    \vspace{10pt}
    \setlength\tabcolsep{10pt}
    \begin{tabular}{cccccccc}
        \toprule
        Dataset     &\# train &\# test  &dim    &$m$    &min    &mean   &max\\
        \midrule
        Emotions       &391        &202    &72     &6      &1      &1.87   &3\\
        Scene          &1211       &1196   &294    &6      &1      &1.07   &3\\
        Yeast          &1500       &917    &103    &14     &1      &4.24   &11\\
        Mediamill      &30993      &12914  &120    &101    &0      &4.38   &18\\
        M-reduced      &1500       &500    &120    &101    &0      &4.39   &13\\
        \bottomrule
    \end{tabular}
    \label{tab:dataset_summary}
\end{table}

We set loops to be either $10$ or $20$, optimize $N$ with granularity down to multiples of $10$, and the edge $\gamma$ for TopOpt from the set $\{0.1, 0.2, 0.3, 0.4\}$. One additional heuristic we added to the adaptive algorithm is to clip the gradient estimates that were of magnitude greater than $1.0$, down to either $1.0$ or $-1.0$ based on their sign. We found this improved convergence of SGD. We also clipped all computed probabilities to the range [0.005, 0.995].

\end{appendices}

\end{document}